		\DeclareMathSymbol{\shortminus}{\mathbin}{AMSa}{"39}
	\DeclareMathOperator*{\Ex}{\mathbb{E}} %
    \newcommand{\ifrac}[2]{{#1}/{#2}}
    \newcommand{\mat}[1]{\mathbf{#1}}
    \DeclarePairedDelimiterX{\infdivx}[2]{(}{)}{%
		#1\;\delimsize\|\;#2%
	}
    \newcommand\pd[2]{\frac{\partial #1}{\partial #2}}
	\newcommand{\subalign}[1]{%
	  \vcenter{%
	    \Let@ \restore@math@cr \default@tag
	    \baselineskip\fontdimen10 \scriptfont\tw@
	    \advance\baselineskip\fontdimen12 \scriptfont\tw@
	    \lineskip\thr@@\fontdimen8 \scriptfont\thr@@
	    \lineskiplimit\lineskip
	    \ialign{\hfil$\m@th\scriptstyle##$&$\m@th\scriptstyle{}##$\hfil\crcr
	      #1\crcr
	    }%
	  }%
	}
    \newcommand\Lrn{\mathit{Lrn}}
    \newcommand\Bel{\mathit{Bel}}
    \newcommand\Plaus{\mathit{Plaus}}
    \newcommand\cseq{\ast}
    \newcommand{\ext}[1]{\overline #1} %
    \newcommand\confdom{[\bot,\!\top]}
    \newcommand\X{\mathcal X}
	\newcommand\Boltz{\mathrm{Boltz}}
    \gdef\parencite{\citep}
    \gdef\textcite{\citet}
	\tikzset{AmpRep/.style={ampersand replacement=\&}}
	\tikzset{center base/.style={baseline={([yshift=-.8ex]current bounding box.center)}}}
	\tikzset{paperfig/.style={center base,scale=0.9, every node/.style={transform shape}}}
	\tikzset{dpadded/.style={rounded corners=2, inner sep=0.7em, draw, outer sep=0.3em, fill={black!50}, fill opacity=0.08, text opacity=1}}
	\tikzset{dpad0/.style={outer sep=0.05em, inner sep=0.3em, draw=gray!75, rounded corners=4, fill=black!08, fill opacity=1, align=center}}
	\tikzset{dpadinline/.style={outer sep=0.05em, inner sep=2.5pt, rounded corners=2.5pt, draw=gray!75, fill=black!08, fill opacity=1, align=center, font=\small}}
 	\tikzset{dpad/.style args={#1}{every matrix/.append style={nodes={dpadded, #1}}}}
	\tikzset{light pad/.style={outer sep=0.2em, inner sep=0.5em, draw=gray!50}}
	\tikzset{arr/.style={draw, ->, thick, shorten <=3pt, shorten >=3pt}}
	\tikzset{arr0/.style={draw, ->, thick, shorten <=0pt, shorten >=0pt}}
	\tikzset{arr1/.style={draw, ->, thick, shorten <=1pt, shorten >=1pt}}
	\tikzset{arr2/.style={draw, ->, thick, shorten <=2pt, shorten >=2pt}}
    \theoremstyle{plain}
    \newtheorem{theorem}{Theorem}
	\newtheorem{coro}{Corollary}[theorem]
    \newtheorem{prop}[theorem]{Proposition}
    \newtheorem{fact}[theorem]{Fact}
    \newtheorem{lemma}[theorem]{Lemma}
    \theoremstyle{definition}
    \declaretheorem[name=Definition, qed=$\square$]{defn}
    \declaretheorem[name=Example, qed=$\square$]{example}
    \definecolor{openQcolor}{rgb}{0.9,0.2,0.9}
	\crefname{defn}{Definition}{Definitions}
	\crefname{prop}{Proposition}{Propositions}
    \crefname{issue}{Issue}{Issues}
	\DeclareMathAlphabet{\mathdcal}{U}{dutchcal}{m}{n}
	\DeclareMathAlphabet{\mathbdcal}{U}{dutchcal}{b}{n}
    \newcommand{\nhphantom}[2]{\sbox0{\kern-2%
		\nulldelimiterspace$\left.\delimsize#1\vphantom{#2}\right.$}\hspace{-.97\wd0}}
	\newsavebox{\abcmycontentbox}
	\newcommand\DeclareDoubleDelim[5]{
	    \DeclarePairedDelimiterXPP{#1}[1]%
			{%
				\sbox{\abcmycontentbox}{\ensuremath{##1}}%
			}{#2}{#5}{}%
		    {%
				\nhphantom{#3}{\usebox\abcmycontentbox}%
				\hspace{1.2pt} \delimsize#3%
				\mathopen{}\usebox{\abcmycontentbox}\mathclose{}%
				\delimsize#4\hspace{1.2pt}%
				\nhphantom{#4}{\usebox\abcmycontentbox}%
			}%
	}
	\newsavebox{\aar@content}
	\newcommand\aar{\@ifstar\aar@one@star\aar@plain}
	\newcommand\aar@one@star{\@ifstar\aar@resize{\aar@plain*}}
	\newcommand\aar@resize[1]{\sbox{\aar@content}{#1}\scaleleftright[3.8ex]
		{\Biggl\langle\!\!\!\!\Biggl\langle}{\usebox{\aar@content}}
		{\Biggr\rangle\!\!\!\!\Biggr\rangle}}
    \newcommand{\TODO}[1][INCOMPLETE]{{\centering\Large\color{red}$\langle$~\texttt{#1}~$\rangle$\par}}
    \newcommand{\dfootnote}[1]{%
        \let\oldthefootnote=\thefootnote%
		\setcounter{footnote}{999}
        \renewcommand{\thefootnote}{\textdagger}%
        \footnote{#1}%
        \let\thefootnote=\oldthefootnote%
    }
\definecolor{brownish}{rgb}{0.5, 0.2, 0.1}
\newtcolorbox{wip}{%
    colback=brownish!20!white,%
    title={$\langle$under construction$\rangle$},%
    enhanced jigsaw,
    breakable,
    colframe=brownish!40!white,%
}
\newtcolorbox{phaseout}{%
    colback={gray!02!white},
    coltext={gray!35!white},
    colframe={red!02!white},
    coltitle={red!35!white},
    title={~\hfill(depricated)},
    enhanced jigsaw,
    fontupper=\small,
    parbox=false,
    boxrule=0pt,
    sharp corners,
    breakable
}
\newtcolorbox{computation}{%
    colback={white},
    enhanced jigsaw,
    fontupper=\Large\itshape,
    fontlower=\small,
    parbox=false,
    boxrule=0pt,
    frame hidden,
    borderline west={4pt}{0pt}{green!20!black!40!white},
    sharp corners,
    breakable,
    enlarge left by=-4em,
    enlarge right by=4em,
    width=\linewidth+8em
}
\newcommand{\@minipagerestore}{\setlength{\parskip}{\medskipamount}}
\let\realItem\item %
\newcommand\conflabel[1]{\textbf{[#1]}}
\NewDocumentCommand\myItemboldperiod{o}{%
   \IfNoValueTF{#1}%
      {\realItem}%
      {\realItem[\conflabel{#1}]%
        \def\@currentlabel{#1}%
        \protected@edef\cref@currentlabel{[CFaxiomsi][][]#1}%
        }%
}
\newlist{CFaxioms}{enumerate}{1}
\setlist[CFaxioms]{
    resume,%
    label=\conflabel{CF\arabic{*}},
    ref={CF\arabic*},
    leftmargin=*,
    itemindent=1.5em,
    labelsep=1em,
    before=\let\item\myItemboldperiod,
    topsep=1ex
    }
\crefname{CFaxiomsi}{}{}
\newlist{LrnAxioms}{enumerate}{1}
\setlist[LrnAxioms]{
    resume,%
    label=\conflabel{L\arabic{*}},
    ref={L\arabic*},
    leftmargin=*,
    itemindent=1.5em, labelsep=1em, topsep=1ex,
    before=\let\item\myItemboldperiod,
    }
\crefname{LrnAxiomsi}{}{}
\newlist{LrnBelAxioms}{enumerate}{1}
\setlist[LrnBelAxioms]{
    resume,%
    label=\conflabel{LB\arabic{*}},
    ref={LB\arabic*},
    leftmargin=*,
    itemindent=1.5em, labelsep=1em, topsep=1ex,
    before=\let\item\myItemboldperiod,
    }
\crefname{LrnBelAxiomsi}{}{}
\newlist{URaxioms}{enumerate}{1}
\setlist[URaxioms]{
    resume,%
    label=\conflabel{UR\arabic{*}},
    ref={UR\arabic*},
    leftmargin=*,
    itemindent=1.5em,
    labelsep=1em,
    before=\let\item\myItemboldperiod,
    topsep=1ex}
\crefname{URaxiomsi}{}{}
\newlist{CDaxioms}{enumerate}{1}
\setlist[CDaxioms]{
    resume,%
    label=\conflabel{CD\arabic{*}},
    ref={CD\arabic*},
    leftmargin=*,
    itemindent=1.5em,
    labelsep=0.5em,
    before=\let\item\myItemboldperiod,
    topsep=1ex}
\crefname{CDaxiomsi}{}{}
\newcommand\commentout[1]{}
\newcommand\vnew\relax 
\title{Learning with Confidence}
\author[1,2]{\href{mailto:<oli@cs.cornell.edu>?Subject=Learning With Confidence}{Oliver E. Richardson}{}}
\affil[1]{%
    Computer Science Dept.\\
    Universit\'{e} de Montr\'eal\\
    Montr\'eal, Canada
}
\affil[2]{%
    Mila -- Quebec AI Institute
}
\begin{document}
\maketitle

\begin{abstract}

  We characterize a notion of confidence that arises in learning or updating beliefs:
    the amount of trust one has in incoming information and its impact on the belief state. 
  This \emph{learner's confidence} can be used alongside (and is easily mistaken for) probability or likelihood, but it is fundamentally a different concept---one that captures many familiar concepts in the literature, including learning rates and number of training epochs, Shafer's weight of evidence,  and Kalman gain.
  We formally axiomatize what it means to learn with confidence, give two canonical ways of measuring confidence on a continuum, and prove that confidence can always be represented in this way. 
  Under additional assumptions, we derive more compact representations of confidence-based learning in terms of vector fields and loss functions.
  These representations induce an extended language of compound ``parallel'' observations. 
  We characterize \emph{Bayesian} learning as the special case of an \emph{optimizing learner} whose loss representation is a linear expectation. 
\end{abstract}

\section{Introduction}\label{sec:intro}
\def\stmt{$A$}

\commentout{%
	The ability to articulate a \emph{degree of confidence}
	is a critical aspect of representing knowledge.
	There are
	many well-established ways to quantify (un)certinaty \parencite[\S2]{halpern2017reasoning},
		and chief among them is probability.
	While ``confidence'' can be coherently read in probabilistic terms,
		such usage may shadow another important concept.
	This paper details a different conception that arises when updating beliefs.
	As we shall see, this notion of confidence
	complements traditional representations of uncertainty (such as probability),
	and moreover unifies several different concepts across AI.
}

What does it mean to have a high degree of confidence in a statement $\phi$? 
It is often taken to mean that $\phi$ is likely.
We argue that there is also another conception of confidence that arises when learning---one that complements likelihood and, moreover, unifies several different concepts in the literature.
This kind of confidence is a measure of \emph{trust} in an observation $\phi$, rather than its likelihood;
it
quantifies how seriously to take $\phi$ in updating our beliefs.
So at one extreme,
if we observe $\phi$ but have no confidence in it,
we do not change our beliefs at all;
at the other, if we have full confidence in $\phi$,
we fully (and irreversibly) incorporate it into our beliefs.

\begin{example}
 \label{ex:prob-simple}
Suppose our belief state is a probability measure $P$, and we observe an event $\phi$.
The standard way to learn $\phi$ is to condition on it  (i.e., adopt belief state $P \mid \phi$). 
This is a full-confidence update; $\phi$ has probability 1 afterwards, 
	and conditioning on it again has no further effect.
Here is one obvious way
to interpret intermediate degrees of confidence:
starting with prior $P$ and
learning $\phi$ with confidence
$\alpha \in [0,1]$,
we end up with
posterior $(1-\alpha)P + \alpha (P \mid \phi)$.
Thus, having high confidence in $\phi$ leads to posterior beliefs that give $\phi$
high probability.
The converse is false, however.
If an untrusted source tells us $\phi$ which we already happen to believe,
then our prior assigns $\phi$ high probability,
we learn $\phi$ with low confidence,
and our posterior beliefs still give $\phi$ high probability.
\commentout{
Prior probability and confidence are further decoupled:
if we learn a surprising fact $\phi$ from a trusted source, we have high confidence in $\phi$ despite it having low prior probability.
}
\end{example}

Confidence allows us to be uncertain about observations,
which is quite different in principle from making observations that are uncertain.
\emph{Jeffrey's rule} (\citeyear{Jeffrey68}) 
is a well-established approach to the latter.
An important feature of the former, however, is that it enables
learning without fully committing to new observations.
Full-confidence updates, such as conditioning in \cref{ex:prob-simple}, are irreversible: 
from $\phi$ and the posterior $P|\phi$, it is not possible to recover the prior belief $P$.
The same is true of Jeffrey's rule,
which, in our conception, also prescribes full-confidence updates.
The concept we propose here is more similar to 
that behind of Shafer's \emph{Theory of Evidence} (\citeyear{shafer1976mathematical}),
although his account is specialized to a specific representation of uncertainty that has since fallen out of fashion.

\begin{example}
 	\label{ex:shafer}
Suppose our beliefs are represented by a 
\emph{%
	(Dempster-Shafer)
belief function},
which generalizes a probability measure over
a finite set $W$ of possible worlds.
\commentout{
    More precisely,
    we define our belief state to be a
    \emph{mass function} $m : 2^W \! \to\! [0,1]$
    satisfying $\sum_{U \subseteq W} m(U) \!=\! 1$
    and $m(\emptyset) \!=\! 0$.
    Such mass functions have a 1-1 correspondence with belief functions, and
    the belief function corresponding to $m$ is given by
    $\Bel_m(U) = \sum_{V \subseteq U} m(V)$
    \parencite{shafer1976mathematical}.}%
\def\complem#1{\overline{ #1}}%
Like a probability, a belief function $\Bel$ assigns to each event
$U \subseteq W$ a number $\Bel(U) \in [0,1]$,
with
$\Bel(\emptyset) = 0$ and $\Bel(W) = 1$.
It need not necessarily be that
$\Bel(U) + \Bel(\complem{U}) = 1$, but $\Bel$
must satisfy certain axioms (whose details do not matter for our purposes)
ensuring that
$
	\Bel(U) + \Bel(\complem{U}) \le 1.
$
$\Bel$ can be equivalently represented by its
\emph{plausibility function}
$\Plaus(U) := 1 - \Bel(\complem{U})$.
It is easy to see that $\Bel(U) \le \Plaus(U)$, and 
if $\Bel$ is a probability measure, then
$\Bel = \Plaus$.

\commentout{
	Suppose we come accross evidence that supports an event $\phi
	\subseteq W$ to a degree $\alpha \in [0,1]$.
	Together, $\phi$ and our confidence $\alpha$ in it
	can be represented by another mass function $s$,
	called a \emph{simple support function},
	by placing mass $\alpha$ on the event $\phi$, and the rest $(1-\alpha)$
	on the trivial event $W$.
	To combine our prior belief $m$ with the new evidence $s$,
	Shafer argues we should use Dempster's rule of combination
	to obtain a posterior $m' := m \oplus s$,
	which in this case equals:
	\begin{align*}
	 	m'(U) &=
		\frac{1}{\!\displaystyle 1 - \alpha \sum_{\mathclap{V \subseteq (W \setminus \phi)}} m(V)\!}
		\Big(
		(1-\alpha) m(U) +
		\alpha \sum_{\substack{\mathclap{V \subseteq W} \\ \mathllap{V \cap \phi} = \mathrlap{U}}} m(V)
			\Big).
	\end{align*}
	It is easy to verify that when $\alpha = 0$, the posterior beliefs are the same as the
	prior ones, and that when $\alpha = 1$,
	all mass is assigned to subsets of $\phi$.
	It follows that, after the update, $\Bel_{m'}(\phi)$.
	So again, we have two extremes in confidence, continuously parameterized
	by a value $\alpha \in [0,1]$.
	}
Suppose we come accross
evidence
that supports an event $\phi \subseteq W$
to a degree $\alpha \in [0,1]$.
Together, $\phi$ and our confidence $\alpha$ in it
can be represented by
the \emph{simple support function}
\vspace{-2ex}
\[
    \qquad\Bel_{(\alpha,\phi)}(U) := \begin{cases}
        1 & 
		\text{ if }U = W \\
        \alpha & \text{ if } \phi \subseteq U \subsetneq W \\
		0 &\text{ otherwise. } \\
    \end{cases}
\]

To combine belief functions,
	Shafer argues for Dempster's \emph{rule of combination} ($\oplus$). 
If we use $\oplus$ to combine two
simple support functions for $\phi$
with degrees of support $\alpha_1$ and $\alpha_2$, we get another simple support function
for $\phi$, with combined support $\alpha_1 + \alpha_2 - \alpha_1\alpha_2$.
As we will see \cref{sec:vecrep}, confidence also has an additive form. 
In Shafer's theory, this is the \emph{weight of evidence} $w = - k \log (1-\alpha)$ for some $k > 0$ [\citeauthor[pg 78]{shafer1976mathematical}].
The additive form of confidence plays a fundamental role in Shafer's theory,
	as it does in ours.

Using $\oplus$ to combine our prior with our evidence leads to
posterior belief $\Bel' := \Bel \oplus \Bel_{(\alpha,\phi)}$,
whose plausibility measure happens to be %
\begin{equation}
\Plaus'(U) = \frac
	{\alpha\; \Plaus(U \cap \phi) + (1-\alpha)\, \Plaus(U)}
	{1 - \alpha + \alpha\; \Plaus(\phi)}.
\label{eq:ds-plaus}
\end{equation}
It is easy to verify that
$\Bel' = \Bel$
when $\alpha = 0$,
and it can also be shown
that
$\Bel'(\phi) = \Plaus'(\phi) = 1$ when $\alpha = 1$.
So, as before, confidence $\alpha \in [0,1]$ parametrizes a continuous path
from ignoring $\phi$ to fully incorporating it.
\commentout{
	Alternatively, suppose that $m$ is not a probability but rather another simple support function on $\phi$. Then so is $m' = m\oplus s$.
	How much total evidence for $\phi$ does $m'$ represent?
	It is overwhelmingly standard to have a measurement that combines additively: if you had three (distinct) gallons of water and get another, you now have four; if you had six (independent) random bits and get three more, you now have nine.
	Is there an additive measure of confidence for simple support functions?
	Shafer calls such a quantity \emph{weight of evidence}, and proves that that of $s$ must be of the form $w = - k \log (1-\alpha)$ for some $k > 0$ [\citeauthor[pg 78]{shafer1976mathematical}].
	\commentout{
		Note that this is precisely the expression for $t$
		in \eqref{eq:loglogiota},
		because a choice of $\iota < 1$
		is equivalent to a choice of $k = \log(1-\iota) < 0$.
	}
	Weight of evidence
	is another important way of measuring confidence,
	and plays
	a fundemental rule in the theory of belief functions
	[\citeauthor[e.g.][Theorem 5.5]{shafer1976mathematical}]
}%
Yet the meaning of intermediate degrees of confidence can be subtle. 
In the special case where $\Bel = \Plaus$ is a probability
measure, 
a full confidence update ($\alpha=1$) yields the same conditioned 
probability $\Plaus' = (\Plaus | \phi)$ as in \cref{ex:prob-simple}.
Furthermore, the set of possible posteriors for intermediate $\alpha \in (0,1)$ is the same in both cases.
However, the two paths are parameterized differently;
	in fact, 
for all $\alpha  \in (0,1)$ the two updates disagree.
It follows that
the appropriate numerical value of $\alpha$ must depend on 
more than just an intuition of ``fraction of the way to the update''.
\commentout{
	We now look at some special cases. Suppose that $\Bel_m$ is a probability measure $\Pr$, or equivalently, that $m$ only assigns mass to singletons. Then $m'$ also only assigns mass to singletons, and is given by:
	\begin{equation}
		m'(\{x\}) =
	 	\frac{\alpha\; \Pr(\{x\} \cap \phi) + (1-\alpha)\, \Pr(\{x\})}{1 - \alpha + \alpha\; \Pr(\phi)}.
	 	\label{eq:ds-prob}
	\end{equation}
	Thus, as a function of $\alpha \in [0,1]$, $m'$ is a path that begins at $\Pr$,  ends at $(\Pr |\phi)$, and can even be viewed as a ``proportion of the way to incorporation'', just like in \cref{ex:prob-simple}---%
	yet intermediate values have different meanings.
	Therefore, to appropriately determine a numerical value of confidence, you need to know something more about the updating procedure.
	}%
\end{example}

Shafer's theory aims to address two seemingly problematic aspects of Bayesianism:
it  prescribes a belief representation that can better handle ignorance, 
and enables observations other than those that ``establish a single proposition with certainty'' \parencite[Chapter 1: \S7,\S8]{shafer1976mathematical}.
Ironically, in solving the first problem, his solution to the second becomes inaccessible to those who do not work with Dempster-Shafer belief functions. 
Our notion of learner's confidence directly addresses Shafer's second concern, but applies far more broadly.
A significant strength of our approach is that we do not take a stand on how beliefs should be represented---the concept of trust applies whether you use probability measures, belief functions, graphical models, imprecise probabilities, or something entirely different.
To illustrate, we 
now unpack the role of confidence in neural networks.

\begin{example}
		[Training a NN]\label{ex:classifier}
The ``belief state'' of a neural network may viewed as a setting $\theta \in \Theta \subseteq \mathbb R^d$ of weight parameters.
For definiteness, suppose we are talking about a classifier, so that
there is a space $X$ of inputs, a finite set $Y$ of labels,
and a parameterized family of functions
$\{ f_\theta : X \to \Delta Y \}_{\theta \in \Theta}$ mapping inputs $x \in X$ to distributions $f_\theta(x) \in \Delta Y$ over labels.
In the supervised setting, an observation is a pair $(x,y)$ consisting of an input $x$ 
labeled with class $y$.

Suppose we now observe $\phi = (x,y)$
with some degree of confidence;
how should we update the weights $\theta$?
\def\step{\mathtt{step}}
In contrast with previous examples, it is not so obvious
		how to learn
		 $\phi$ with full confidence.
Instead, modern learning algorithms
tend to be 
iterative
procedures
$\step: (X \times Y) \times \Theta \to \Theta$
that make small adjustments 
$\theta \mapsto \step(\phi,\theta)$
to the weights
\unskip. 
Each step is essentially a low-confidence update.
There is no guarantee, for example, that 
	$f_{\step(\phi,\theta)}(x)$ 
gives high probability to $y$---only that it is higher than $f_\theta(y|x)$.
This lower level of confidence is arguably what makes these learning algorithms robust to noisy and contradictory inputs. 
\commentout{
	In other words, such algorithms	do not take any one encounter with a training example too seriously.
	Indeed, this lower level of confidence
	is arguably what makes this learning process robust to noisy or contradictory inputs.
}\commentout{
	Modern learning algorithms (like gradient descent)
	are iterative procedures that
	make incremental changes to the weights.
	Therefore, if we perform one iteration of such a procedure 
	to update $\theta$ using a labeled training example $\phi = (x,y)$ to obtain new weights $\theta'$, there is no guarantee that $f_{\theta'}(x)$ gives high probability to $y$---only that it is higher than it was before.
	In other words, such algorithms 
		(in contrast to their historical counterparts like conjunction learning algorithms \parencite{conjunctions})
	do not take any one encounter with a training example too seriously---
	\unskip that is, they make low-confidence updates to the weights.
	This relative distrust of individual data points is arguably what makes the training process robust to noisy or contradictory observations.
}\commentout{
	As a result,
	there is a significant difference between going through the training data once
	\unskip, and doing so many times.}%

Higher confidence updates 
	can be obtained by 
	applying $\step$ more than once.
\def\thetainf{\theta_\infty}%
\def\thetalim{\theta_*}%
From initial weights $\theta_0$
	and defining $\theta_{n+1} = \step(\phi,\theta_{n})$,
	we get a sequence
	$(\theta_0, \theta_1, \theta_2, ...)$
	that converges to some $\thetalim \in \Theta$.
These limiting weights fully incorporate $\phi$ 
in the sense that
$\thetalim = \step(\phi,\thetalim)$, 
and also that 
$f_{\thetalim}(x)(y) = 1$ (at least if the network is sufficiently over-parameterized), i.e., $x$ is classified as $y$ with probability 1. 
Correspondingly, adopting belief $\thetalim$ is
appropriate only if we have complete trust in $\phi$,
meaning we find it critical that $x$ be classified as $y$.
(At the other extreme, 
	if we have no confidence in $\phi$, we should
	not update $\theta$ at all.)
Thus, the number of training iterations $n$
is a
measure of
confidence: it interpolates
between no confidence (zero iterations of $\step$) and full confidence
(infinitely many iterations of $\step$).
\commentout{
	This way of measuring confidence has a convenient property:
	first updating with confidence $n$ (that is, performing $n$ training iterations),
	and then afterwards updating with confidence $m$ (so $m$ additional iterations),
	is equivalent to a single update with confidence $m+n$.
	We call a measure of confidence that behaves this way \emph{additive}.
}%
Like Shafer's weight of evidence (\cref{ex:shafer}), the number of training iterations is an additive measure of confidence.

In the simplest settings, 
training examples do not come with confidence annotations,
in which case one effectively treats them all with 
	the same default confidence (by selecting a learning rate).
The number of times that $\phi = (x,y)$ appears in a dataset
	is then the de facto measure of confidence in $\phi$.
Often, though, these are not our intended confidences,
	which is why it can be helpful to remove duplicates 
	\citep{lee2021deduplicating}
	\unskip.
In richer settings, a more nuanced degree of confidence specific to each training example often arises, 
    such as agreement between annotators 
	\citep{artstein2017inter}
	\unskip,
    or
confidence scores in self-training \parencite{zou2019confidence}.

It is worth emphasizing that confidence
	is not always just a matter of accuracy.
Suppose, for example, that the classifier is intended to screen job applications, and that we want to make hiring practices less discriminatory.
In this case, we should have low confidence in training data based on prior hiring decisions---not because it is inaccurate, but because we do not trust it to inform our new hiring practice.
\end{example}

\commentout{
	\begin{figure}
	\centering
	\begin{tikzpicture}
		\begin{scope}[fill=gray,fill opacity=0.2,rounded corners=4px]
			\fill (0,0) rectangle (8,5); %
			\fill[] (0.2,0.1) rectangle (7.8,4.5); %
			\fill[] (0.4,0.2) rectangle (7.6,4.0); %
			\fill[] (0.6,0.3) rectangle (7.4,3.5); %
			\fill (0.8,0.4) -- (0.8, 3.0) -- (3, 3.0)
			 	to[out=0,in=0,looseness=2] (3,0.4) --cycle; %
			\fill (7.2,0.4) -- (7.2, 3.0) -- (5, 3.0)
			 	to[out=180,in=180,looseness=2] (5,0.4) --cycle; %
		\end{scope}
		\begin{scope}[anchor=north]
			\node at (4.0, 5.0) {Update Rules};
			\node at (4.0, 4.5) {Flow URs~~~$f$};
			\node at (4.0, 4.0) {Diffble URs~~~$X$};
			\node at (4.0, 3.5) {Conservative CFs~~~$\mathcal L$};
			\node at (2.5, 3.0) {Convex CFs};
			\node at (4.0, 2.5) {Linear CFs};
			\node at (5.5, 3.0) {Concave CFs};
		\end{scope}
	\end{tikzpicture}
	\caption{%
		A map of different kinds of commitment functions and their representations.}
	\end{figure}
	}

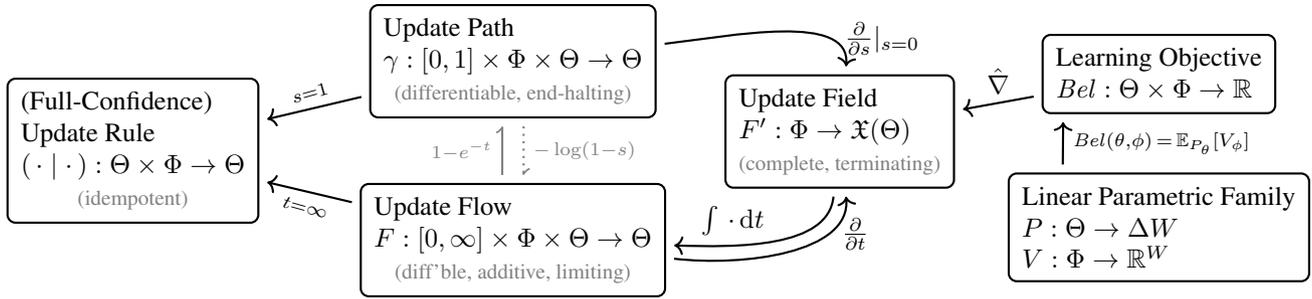
\begin{figure*}
\begin{center}
~\hfill
\hspace{-3cm}
\begin{tikzpicture}[yscale=0.85]
\begin{scope}[every node/.style={align=left,rounded corners=3,draw,thick,inner sep=5pt,anchor=center}]
	\node at (-5, 0) (fc) {%
		(Full-Confidence) \\
		Update Rule \\
		$(\,\cdot\mid \cdot\,) : \Theta \times \Phi \to \Theta$\\
			\smaller\hfill\color{gray}(idempotent)\hfill};
	\node at (0, -1.4)(flow) {%
		Update Flow\\
		$
		F
		: [0,\infty] \times \Phi \times \Theta \to \Theta$\\
			\smaller\hfill\color{gray}(diff'ble, additive, limiting)\hfill};
	\node at (0, 1.4)(path) {
		Update Path \\
		$\gamma: [0,1] \times \Phi \times \Theta \to \Theta$\\
		\smaller\hfill\color{gray}(differentiable, end-halting)\hfill};
	\node at (4.3, 0.3)(vfield) {%
		Update Field \\
		$F' : \Phi \to \mathfrak X(\Theta)$\\
		\smaller\hfill\color{gray}(complete, terminating)\hfill};
	\node at (8.5, 1.2) (loss) {%
		Learning Objective \\
		$\Bel: \Theta \times \Phi \to \mathbb R$};

	\node at (8.5, -1.2) (ev) {
		Linear Parametric Family  \\
		$P: \Theta \to \Delta W$ \\
		$V : \Phi \to \mathbb R^W$
	};
\end{scope}
	\draw[arr] (loss) to node[above]{$\hat\nabla$} (vfield);

	\draw[arr] (path) to[out=7,in=85]
		node[right=5pt,pos=0.7]{$\frac{\partial}{\partial s}|_{s=0}$} (vfield);
	\draw[arr] (flow) to[out=-7,in=-85]
		node[right=3pt,pos=0.7]{$\frac{\partial}{\partial t}$} (vfield);

	\draw[arr] (vfield) to[out=-95,in=-2] node[above=0pt,pos=0.70]
		{$\int \,\cdot\,\mathrm dt$}
		(flow);
	
	\draw[arr] (path) to node[above, sloped]{$\scriptstyle s=1$} (fc);
	\draw[arr] (flow) to node[below,sloped]{$\scriptstyle t=\infty$}(fc);
	
	\draw[arr] (ev.150) to node[right]{$\scriptstyle\Bel(\theta,\phi) \,=\, \Ex_{P_\theta}[V_\phi]$} (loss.-157);  %

	\draw[arr,-left to,dotted,gray] (path) edge[transform canvas={xshift=4pt}]
		node[right]{$\scriptstyle-\log(1-s)$} (flow);
	\draw[arr,-left to,gray] (flow) edge[transform canvas={xshift=-4pt}]
	 	node[left]{$\scriptstyle1-e^{-t}$}(path);
	
\end{tikzpicture}
\hspace{-3cm}\hfill~
\end{center}
\caption{\textit{Relationships between different representations of confidence-based learners.}
\small
Classical update rules like conditioning are projections (\cref{ssec:full-learn}), and correspond to learning with full-confidence (far left). 
\cref{theorem:add-reparam} guarantees an additive representation of a learner (middle left, bottom), which can also be represented with fractional confidence values (middle left, top), per \cref{prop:az-iso}.  Learners can be represented by vector fields (middle right), with benefits detailed in \cref{sec:vecrep}. We investigate the class of \emph{optimizing learners} induced by \cref{ax:lb-ascent} in \cref{sec:loss-repr} (far right, top), and the special case in which the learning objective is linear (far right, bottom). 
This turns out to characterize learning by application of Bayes Rule (\cref{prop:Boltz-Bayes}).
}\label{fig:map}
\end{figure*}

Perhaps the most important application of learner's confidence is  
in treating different sources of information with different degrees of trust.
Sensor fusion, which aims to combine readings from multiple sensors of various reliabilities, 
	is a clear example---%
	and Kalman filtering \citep{kalman1960new,brown1997introduction}, the standard approach to this problem, indeed comes with its own account of confidence.

\begin{example}[1D Kalman Filter]
	\label{ex:kalman1d}
\def\estx{\hat{x}}
Suppose we are modeling a
dynamical system whose state is a real number
$x \in \mathbb R$, and we receive
noisy measurements $z$ of $x$. 
The Kalman Filter 
tells us how to track this information 
with 
belief state $(\estx, \sigma^2)$,
where $\estx \in \mathbb R$ is our current estimate of
$x$, and
$\sigma^2$ is an uncertainty in that estimate, in the form of a variance. 
We now receive an observation
$z \sim \mathcal N(x, r^2)$ from a sensor.
How should we update our beliefs
\unskip?

The answer ranges from ignoring $z$ to replacing $\estx$ with it, depending on how much we trust the sensor.
The Kalman filter measures this trust with two (entangled) kinds of confidence: the precision $r^{-2}$ of the sensor, and a quantity $K$ called \emph{Kalman gain}.
The updated state
$(\estx', {\sigma^{2\prime}})$
is then:
\begin{align*}
	\estx' &= \estx + K (z - \estx)
	,
	&
	\sigma^{2\prime} &= (1 - K)^2 \sigma^2 + (K)^2 r^2.
\end{align*}
Like the other confidence measures we have seen, $K$
interpolates (linearly) between our prior mean $\estx$ and the new observation $z$, and (``quadratically'') between our prior uncertainty $\sigma^2$ and the sensor variance $r^2$.

More than in previous examples, we can also say something prescriptive about how to select a degree of confidence.
Assuming the goal is to maintain an unbiased estimate of $x$ with minimal uncertainty (as measured by expected squared error of $\estx$), 
and that $z$ is indeed the result of adding independent noise to $x$, 
then the optimal Kalman gain is
$
    K_{\mathrm{opt}}
        = \ifrac{\sigma^2}{(\sigma^2 + r^2)}
$
\parencite[p. 146]{brown1997introduction},
and $K$ is typically chosen this way in practice
\parencite{kalmanfilter.net}.
\commentout{
Plugging this value into the update equations, we find that this choice
makes $\estx'$ the average of our prior $\estx$ and new observation $z$,
weighted by their respective variances.
}%
Let us now revisit the extremes. 
If $K = 0$, which
    is optimal
when $z$ has unbounded variance,
    the belief state remains unchanged:
intuitively, there is so much noise in observations that
    we ignore them.
At the other extreme, if no noise is added ($r^2=0$),
then $K_{\mathrm{opt}} = 1$ and we end up with a posterior $(z, 0)$
based solely on the new observation.
\end{example}

\Cref{ex:kalman1d} features three kinds of (un)certainty:
\begin{enumerate}[left=0.1em,nosep,parsep=\parskip]
\item \textbf{Learner's Confidence:} a subjective trust 
	in how seriously to take an observation for updating (e.g., $K$)%
        .
    \label{item:learn-conf}

\item \textbf{Internal (Epistemic) Confidence:}
        the degree of uncertainty present in a given belief state,
        either overall ($\sigma^2$)
        or in a given statement
        (e.g., the density 
			$
			\phi \mapsto 
			\mathcal N(\phi|\hat x, \sigma^2)$).
	Internal confidences in our other examples
        include the probability $\Pr(\phi)$ in
        \cref{ex:prob-simple}, the degree
        of belief $\Bel(\phi)$ in \cref{ex:shafer},
        and the value of the loss function $\mathcal L(\theta,\phi)$
        used to train the classifier in \cref{ex:classifier}.
	\label{item:epistemic-conf}

\item
    \textbf{Statistical (Aleatoric) Confidence:}
    an objective measure of the (un)reliablility of an observation,
    based on historical data and/or modeling assumptions about how
    observations arise
    (e.g., the noise level $r^2$)%
    .
    \label{item:stat-conf}
\end{enumerate}
The three senses of the word ``confidence'' are related,
    but different in nature.
A great deal of work has already gone into understanding the differences between senses \ref{item:epistemic-conf} and \ref{item:stat-conf} \citep{der2009aleatory,hullermeier2021aleatoric}.
We (obviously) focus on sense \ref{item:learn-conf},
which we have tried to distinguish from
more pervasive usage of the word (sense \ref{item:epistemic-conf})
to quantify subjective likelihood, degree of belief, or (un)certainty.
Nevertheless,
    epistemic confidences (sense \ref{item:epistemic-conf}) may be thought of as aggregate
    reflections of learner's confidence (sense \ref{item:learn-conf}) in past observations;
	conversely, it is often possible to define learner's confidence by its effect on epistemic confidence
	(see \cref{sec:loss-repr}).

One should also distinguish  
	learner's confidence (sense \ref{item:learn-conf}),
	at least in principle,
    from statistical confidences
    (sense \ref{item:stat-conf})
    such as the variance in readings
    of a sensor (\cref{ex:kalman1d})
    or annotator agreement (\cref{ex:classifier}).
When available,
    the statistical reliability of an information source
    should absolutely play a role in determining how seriously we take
    it in updating our beliefs;
learner's confidence informed exclusively by a probabilistic model can be seen as an important (``aleatoric'') special case of our theory.
Still, statistical confidence presupposes that observations are drawn (independently) from a (fixed) distribution, while learners's confidence is meaningful even without such assumptions.
\commentout{%
We may not always know the variances of our sensors, and that such a quantity is well-defined is a significant assumption on its own.
Statistical confidences typically require us to know that observations are drawn independently from a fixed distribution, while learners's confidence can be meaningful even without this assumption.}

\textbf{Contributions.}
We hope that these examples have given the reader an intuitive sense of what confidence is, how ubiquitously it arises, and why it is important.
In the remainder of the paper, we study confidence more formally,
	making a series of successively stronger assumptions
	(all satisfied by \cref{ex:prob-simple,ex:shafer,ex:classifier,ex:kalman1d}).
Each set of assumptions enables a new more compact representation for
	a learning rule, summarized in \cref{fig:map}.
In \cref{sec:formalism}, we develop a formal framework 
	laying out axioms for our notion of confidence.
In \cref{sec:conf-continuum}, we focus on the properties of confidence in a continuum, developing 
	vector-field and loss-based representations of learners. 
This can enable simultaneous orderless updates, even in settings where it was not previously possible. 
Finally, we analyze Bayesian updating in \cref{sec:Bayes}.

 \commentout{
 This general idea can be cleaned up by appeal to differential geometry.
 Fix an input $\phi$.
 Assuming that the update paths are differentiable in the degree of confidence at any initial beleifs, the collection of updates with infinitessimal confidence forms a complete vector field $X_\phi$ over the space of beliefs, whose integral curves are paths in belief space, parameterized by confidence $\beta \in [0,\infty]$.
 We step through this more carefully in \cref{sec:field-repr}.

 Finally, if our belief space is endowed with a Riemannian metric, so that we may take gradients, partial update functions may be specified by a loss.}

\commentout{
	\subsection{Other Conceptions of Confidence.}

	\textbf{Probability.}
	Some people do use ``confidence'' to mean the same thing as probability. When they say they have low confidence in $\phi$, they mean that they think $\phi$ is unlikely.

	One of the biggest shortcomings of probability is its inability to represent a truly neutral attitude towards a proposition.
	A value of $\frac12$ may be equally far from zero as it is from one, but is by no means a neutral assessment in all cases: hearing that your favored candidate has a 50\% chance of winning is big news if a win was previously thought to be inevitable.
	For this reason, telling someone the odds are 50/50 is quite different from saying you have no idea.
	By contrast, zero confidence represents something truly neutral:
		a statement made with zero confidence does not stake out a claim, and
		a statement recieved with zero confidence does not affect the recipient's beliefs.
	Nevertheless, in some contexts, we will see that confidences correspond to to probabilities.

	\textit{Opacity.} To use a graphical metaphor, think of certainty as black or white.
	Probability describes shades of gray, while confidence describes opacity.
	If we are painting with black and start with a white canvas, there is a precise correspondence between the opacity and the resulting shade of gray.

	\textbf{Upper and Lower Probabilities.}
	Upper and lower probabilities can describe a neutral attitude towards a proposition, but they are not really a specification of trust, but rather a direct specification of a belief state.
	It isn't immediately clear how to use these representations of uncertainty to update, and they're a little too complex to function effectively as the primitive measure of trust that we're after.

	\textbf{Shafer's Weight of Evidence.}
	Shafer's ``weight of evidence'' is precisely the same concept we have in mind.
	Our analysis precsely reduces to his, in the setting where belief states are Belief functions (which generalize probabilities, but not, say, neural network weights), and observations are events.
	Thus, this paper can be viewed as generalizing this concept to a broader class of settings, without requiring that one adopt Shafer's conception of a belief state or an observation.

	\textbf{Variance and Entropy.}
	The inverse of variance, sometimes known as precision,
		is also commonly used to measure confidence.
	If a sensor is unreliable and can give a range of answers, the variance of the estimate is a very common way of quantifying this reliablility.
	If measurements have zero variance, in some sense one has absolute confidence ($\top$) in the sensor. If measurements have infinite variance, then in some sense one has no confidence in the sensor, since individual samples convey no information about the true value of the quantity measured.
	As with probability, inverse variance will coincide with confidence in some settings; we will see how in \cref{sec:variance}.

	Entropy, like variance, is a standard way of measuring uncertainty, and in some settings, confidence coincides with entropy (see \cref{sec:entropy}).
	The assumption underlying both approaches is that there's some ``true'' value of the variable, and that the randomness is epsistemic (due to sensor errors) rather than aleotoric (inherrent in the quantity being measured).

	\textbf{Confidence Intervals and Error Bars.}
	Another notion of the word ``confidence'' comes from the term ``confidence interval''.
	This concept arises in settings involving a probability distribution $\Pr(X)$ over a metric space $X$, typically $X = \mathbb R$.
	A 95\% confidence interval is the (largest) ball containing 95\% of the probability, and its size is a geometric measurement of how .
	This intuition behind this reading of the word confidence is the same as
}

\section{A Formal Model of Confidence, Learning, and Belief}
	\label{sec:formalism}

Our formalism consists of three components: 
	a domain $\confdom$ of confidence values,
	a space $\Theta$ of belief states, and
	a language $\Phi$ of possible observations. 
For instance:
\begin{itemize}[nosep,itemsep=1pt,left=0.5em]
    \item In \cref{ex:prob-simple}, $\Theta$ is the set of probability
    measures on some measurable space $(\Omega, \mathcal F)$,
    $\Phi$ is the $\sigma$-algebra $\cal F$, and the confidence domain
    is $[0,1]$.
    \item In \cref{ex:shafer}, $\Theta$ is the set of belief functions
    over a finite set $W$, $\Phi = 2^W$ is the set of subsets of $W$,
    and confidence is a degree of support $\alpha \in [0,1]$
    or a weight of evidence $w \in [0,\infty]$.
    \item In \cref{ex:classifier}, 
	$\Theta \subseteq \bar{\mathbb R}^d$ is
    the space of network parameters, $\Phi = X \times Y$ is the space of 
    input-lablel pairs, and the confidence domain is the
	 	extended natural numbers $\{0, 1,\ldots, \infty\}$ under addition.
	\item In \cref{ex:kalman1d}, $\Theta = \Phi = \mathbb R$, 
		The domain of $K$ is $[0,1]$, and the domain of $\sigma^2$ is $[\infty, 0]$. 
		Together, the pair $(K, \sigma^2)$ acts as a measure of confidence. 
\end{itemize}
We call $(\Theta, \Phi, \confdom)$ a \emph{learning setting}.
In this setting, a \emph{learner}
is a function
\commentout{\unskip\footnote{%
	It should be straightforward to extend our theory
	so as to handle randomized updates as well;
	the point is that
	the belief state, observation, and confidence must together contain
	enough information to describe the updating process.
}}
$
	\Lrn : \Phi \times \confdom \times \Theta \to \Theta
$
that describes the belief update process.
Explicitly: from a prior belief $\theta$, and a statement $\phi$
	observed with some degree of confidence $\chi$,
	a learner 
	produces a posterior belief state
	$\Lrn(\phi,\chi,\theta) \in \Theta$.
We use superscripts and subscripts to
fix some arguments
of $\Lrn$ and view it as a function of the others. So
$\Lrn(\phi,\chi,\theta)$
can equivalently be written as
$
	\Lrn_\phi(\chi,\theta) = \Lrn^\chi_\phi(\theta)
	= \Lrn^\chi(\phi,\theta) = \Lrn_{(\theta,\phi)}(\chi)
		 .
$
The rest of \cref{sec:formalism} develops axioms for $\Lrn$ and supporting concepts intended to capture intuitions about learning.

We proceed in three stages.
After starting with an abstract theory of confidence domains $\confdom$ themselves
	(\cref{ssec:confdom}),
we then axiomatize confidence-based updates to beliefs in $\Theta$ (\cref{ssec:comm-func}).
Finally, we bring in observations $\Phi$ 
	and the function $\Lrn$
	 (\cref{ssec:full-learn}).

\subsection{Abstract Confidence Domains}
	\label{ssec:confdom}
A \emph{confidence domain} $(D, \le, \bot, \top, \cseq, \mathfrak g)$
is a set $D$
of confidence values 
equipped with a preorder $\le$,
a 
least element
$\bot$ (``no confidence''), a greatest element
$\top$ (``full confidence''),
and an operation $\cseq$ that combines two 
independent 
degrees of confidence.
We often abbreviate a confidence domain as $D = \confdom$,
leaving $\le$ and $\cseq$ implicit.
\commentout{%
Because $\cseq$ represents \emph{independent} combination,
	we require that it be commutative and associative. 
}%
We want to ignore independent information we have no confidence in, and, if already fully confident, remain so in the face of new independent information. 
Formally, this amounts to requiring, for all $\chi,\chi',\chi'' \in D$:
\def\rightdescmargin{0.2cm}
\begin{itemize}[parsep=0pt,itemsep=1pt,label={},left=\rightdescmargin]
\item $(\chi \cseq \chi') \cseq \chi'' = \chi \cseq (\chi' \cseq \chi'')$
    \hfill (associativity)$\mathrlap{,}$
		\hspace{\rightdescmargin}\;\;
\item $\bot \cseq \chi = \chi$
    \hfill (that $\bot$ is neutral)$\mathrlap{,}$
		\hspace{\rightdescmargin}\;\;
\item $\top \cseq \chi = \top$\;
    \hfill (and that $\top$ is absorbing)$\mathrlap{.}$
		\hspace{\rightdescmargin}\;\;
\end{itemize}
Finally, $D$ comes with geometric information $\mathfrak g$, which may 
	include topology or differentiable structure.
We are especially interested in two continuous domains
	from our examples.
The first is the \emph{fractional domain} $[0,1]$,
whose elements $s \in [0,1]$ represent 
	the ``proportion of the way towards complete trust''.
If you go proportion $s$ towards fully trusting something,
then $s'$ of the remaining way, then overall
you have gone
$s \cseq s'
:= s + s' (1-s) = 
s + s' - s \cdot s'$
    of the way to complete trust.
The other confidence domain of particular interest
 	is the \emph{additive domain} $[0,\infty]$, which is
	ideal for analogies of time and weight.
	
\begin{linked}{prop}{az-iso}
	The fractional domain $[0,1]$ and the additive domain $[0,\infty]$ are isomorphic.
	Furthermore, the space of isomorphisms between them 
		is
		in natural bijection with $(0,\infty)$.
	Specifically, for each $\beta \in (0,\infty)$, there is
		an isomorphism $\varphi_\beta : [0,1] \to [0,\infty]$ given by
	$\varphi_\beta(s) = -\frac1\beta \log(1-s)$
	with inverse $\varphi_\beta^{-1}(t) = 1- e^{-\beta t}$.
\end{linked}

The fact that these two domains are equivalent but only up to $\beta$---a ``choice of units'' in the additive domain, or ``tempering'' in the fractional domain----%
implies that many standard ways of quantifying confidence are equivalent, yet also highlights the fundamental difficulty of doing so in absolute terms (as we began to see at the end of \cref{ex:shafer}).

\commentout{%
Keep in mind that there are confidence domains as well.
The interval $[0,1]$ with $\cseq = \max$ is an important one that is not isomorphic to the additive or fractional domains. 
}%
There are also confidence domains that are not isomorphic to the additive or fractional domains. 
The interval $[0,1]$ with $\cseq = \max$ is an important 1-dimensional one; 
confidence domains can also be multi-dimensional or discrete, but 
our results in \cref{sec:conf-continuum,sec:Bayes} say little about these cases.

	\subsection{Belief States and Commitment Functions}
		\label{ssec:comm-func}
		\label{sec:learning-setting-funcs}
We now reintroduce belief states $\theta \in \Theta$
in order to describe the role of confidence in belief updating.
Observations $\phi$ come later (\cref{ssec:full-learn});
we find that the most essential aspects of confidence can already be understood through the behavior of a function $F = \Lrn_\phi : \confdom \times \Theta \to \Theta$ that describes the learning process for some fixed and abstract $\phi$.
We call such a function $F$ a \emph{commitment function} if it obeys the axioms in this subsection (\cref{ax:zero,ax:combinativity,ax:cont-and-smooth,ax:acyclic,ax:seq-for-more}) intended to ensure that $F$ respects the structure of the confidence domain.

\textbf{No Confidence.}
Having no confidence ($\chi=\bot$) in an observation $\phi$ should lead us to ignore it. 

\begin{LrnAxioms}
    \item
		$
		\forall \phi,\theta.\quad
		\Lrn_\phi^\bot(\theta) = 
		\Lrn_\phi(\bot, \theta) = \theta
		$.
        \label{ax:zero}
\end{LrnAxioms}

\textbf{Full-confidence.}
Since the purpose of
$\Lrn^\top_\phi$
is to \emph{fully} incorporate $\phi$ into our beliefs,
two successive full-confidence updates with the same information ought to have the same effect as a single one:
having fully integrated $\phi$ into our beliefs, 
there is nothing to do upon observing $\phi$ again.

\commentout{
\begin{defn}
	A \emph{full-confidence update rule} is
	a mapping $P: \Phi \times \Theta \to \Theta$ such that
	for all $\phi \in \Phi$, 
	$P_\phi = (\theta \mapsto P(\phi,\theta)): \Theta \to \Theta$ is idempotent.
	That is,	
	$P_\phi(P_\phi(\theta)) = P_\phi(\theta)$
	 for all $\phi\in\Phi$ and $\theta \in \Theta$.
\end{defn}}

\begin{LrnAxioms}
	\item[FC]
	Full-confidence updates are idempotent.
    That is, for all $\phi \in \Phi$,  $\Lrn^\top_\phi \circ \Lrn^\top_\phi = \Lrn^\top_\phi$.
	\label{ax:idemp}
\end{LrnAxioms}

Once $\Theta$, $\Phi$, and any relevant relationships between them are specified, there is often a natural choice of full-confidence update rule.
We illustrate with three examples. 
In each case, the possible belief states $\Theta := \Delta W$ be the set of all probability distributions over a finite set
 $W
  $ of possible worlds.

\begin{enumerate}[wide, label=\textit{(\arabic*)},itemsep=0.05ex,topsep=0pt,labelindent={1em}]
	\item %
	\textbf{Conditioning.}
	First, consider the case where observations are events, i.e., $\Phi := 2^W$.
	The overwhelmingly standard way to update is to condition: 
	starting with $P \in \Delta W$, the conditional measure 
	$P|A \in \Delta W$ is given by $(\mu|A)(B) = \ifrac{P(B \cap A)}{P(A)}$, provided $P(A) > 0$.
	Note that $(P|A)|A = P|A$, so the update is idempotent.
	\commentout{%
	There are well-known issues with conditioning $\mu$ on $A$ when
	$P(A) = 0$, 
	and so typically this operation is left undefined. 
	}%

	\item
	\textbf{Imaging }\parencite{lewis1976probabilities}\textbf{.}
	Suppose
 	we already have a full-confidence update rule
	$f : \Phi \times W \to W$
	that, 
	 given $\phi \in \Phi$ and $w \in W$, produces the world $f(\phi, w) \in W$ ``most similar to $w$, in which $\phi$ is true'' \parencite{gardenfors1979imaging}.
	Idempotence of $f_\phi: W \to W$
	means the world most similar to $f(\phi,w)$ in which $\phi$ is true, is $f(\phi,w)$ itself.
	We can then 
	lift $f$ to a full confidence update rule for $\Delta W$,
	by
	$%
    		F(\phi, P) 
				(A) := 
				P(\{w : f(w, \phi){ \in} A\})
	$,
	intuitively moving the mass of $w$ to
	$f(\phi,w)$.	
	Since $f$ is idempotent, so is $F$.

	\commentout{
	\item More generally, consider a measurable space $\mathcal W = (W, \mathcal A)$, where $W$ is a set and $\mathcal A$ is a $\sigma$-algebra over $W$, and let $\mathcal F \subset \mathcal A$ be closed under supersets in $\mathcal A$.

	\TODO[Properly Use Conditional Probability Measure, to define on all events]

	Conditioning a probability distribution $\mu \in \Delta\X$ on an event $A \in \mathcal A$ also makes sense in this more general measure-theoretic setting, at least so long as $\mu(A) > 0$, and is given by
	$$
		(\mu \mid A) (B) = \frac{\mu(B \cap A)}{\mu(A)}
	$$
	}

	\item
	\textbf{Jeffrey's Rule.}
	The two previous approaches to updating establish that an event with certainty.
	Jeffrey's rule ($\mathit J$) addresses this limitation
		by allowing for uncertain (i.e., probabilistic) observations.
	Formally, let $\Phi$ be the set of pairs $(X,\pi)$
	where $X : W {\to} S$ is a random variable taking values in a set $S$,
	and $\pi \in \Delta S$ is a probability on
	$S$.
	Jeffrey's update rule is:
	$
		{J}((X,\pi),
			P) := \sum_{x \in S} \pi(X{=}x)  P \big|
            (X{=}x).
	$
	When $\pi$ places all mass on some $x \in S$, $\mathit J$ conditions on $X {=} x$.
	For this reason, $\mathit J$ is thought to
		generalize conditioning 
		to observations of ``lower confidence''.
	Yet even when $\pi$ is not deterministic, $J$ \emph{fully} incorporates
	$\pi$ into the posterior beliefs:
	the marginal of $J((X,\pi),P)$ on $X$ is $\pi(X)$,
	and the prior belief 
	$P(X)$ has been destroyed.
	Indeed, $J_{(X,\pi)}$ is idempotent. 
	Therefore, $J$ still establishes observations with full confidence---%
		it's just that those observations are probabilities.
	Experience suggests that this point can be counter-intuitive; we submit that the confusion is clarified by a conception of confidence distinct from likelihood.
\end{enumerate}

\vnew{%

}

\cref{ax:idemp} implies that full-confidence updates are not invertible: they destroy information in the prior, often making for a simpler posterior. 
This potential simplification of future calculations is a major benefit of fully trusting information.
However, full-confidence updates are extreme.
An agent that updates by conditioning, for instance,
permanently commits to believing everything it ever learns
(and thus gains nothing from making the same observation again later). 
Clearly humans are not like this; revisiting information
 	helps us learn \parencite{ausubel1965effect}.
Similarly, artificial neural networks are trained with
 	many incremental updates, and benefit from seeing 
	the training data many times.
We would like an account that allows for less extreme belief alterations,
in which information is only partially incorporated.
This is the role of intermediate degrees of confidence.

\textbf{Geometry.}
Learner's confidence interpolates between 
	ignoring new information and fully defering to it,
	and we would like that interpolation to be continuous and differentiable.
	
\begin{LrnAxioms}
	\item
	If $\confdom$ and $\Theta$ are both topological spaces, then 
	for all $\theta$ and $\phi$, 
	the map
	$
	\Lrn_{(\theta,\phi)} = 
	\chi \mapsto 
	\Lrn(\theta,\chi,\phi)
	$
	is continuous.
	If $\confdom$ and $\Theta$ are both manifolds, then 
	$\Lrn_{(\theta,\phi)}$ is 
	differentiable.
	Furthermore, $\Lrn_\phi^\chi$ is differentiable on a subset $\Theta_\phi$ defined in \cref{prop:maximal-continuous-theta} below.%
		\label{ax:cont-and-smooth}
\end{LrnAxioms}

Ideally the posterior would be continuous in our prior beliefs as well as $\chi$.
This suggests
a simpler strengthening of \cref{ax:cont-and-smooth}:
that
$\Lrn_\phi$ 
be continuous (and differentiable) as a function of $(\chi,\theta)$%
---yet this is often too much to ask for.

\begin{linked}{prop}{no-continuous-condition-ext}
	Take $\Theta = \Delta W$ and $\phi \subseteq W$.
	There exists no continuous function $\Lrn_\phi : \Delta W \times [0,1] \to \Delta W$ 
	with the property that 	$\Lrn_\phi(\mu, 1) = \mu|\phi$ when $\mu(\phi) > 0$. 
\end{linked}

This result is yet another perspective on the familiar difficulties with conditioning on events of probability zero,
but intuitively this should be an edge case.
Instead of imposing an axiom, 
	we observe that it is possible to 
	capture the phenomenon in a useful way even at this abstract level. 

\begin{linked}{prop}{maximal-continuous-theta}
	For all $\phi \in \Phi$, 
	there is a maximal open set 
	$\Theta_\phi \subseteq \Theta$ such that
	the restriction
	$
	\Lrn_{\phi} |_{\Theta_\phi} : 
		[\bot,\!\top) \times \Theta_\phi \to \Theta
	$		
	of 
	$\Lrn_\phi$
	to $\Theta_\phi$ is continuous. 	
\end{linked}
In our examples, $\Theta_\phi$ consists of those
belief states that do not flatly contradict $\phi$.
In \cref{ex:prob-simple}, \cref{prop:no-continuous-condition-ext,prop:maximal-continuous-theta}
imply that $\Theta_\phi = \{ \mu \in \Delta W : \mu(\phi) > 0\}$
is the set of distributions for which conditioning on $\phi$ is defined.
In \cref{ex:classifier}, $\Theta_{(x,y)}$ is the set of parameters at which gradients $\nabla_{\theta}\ell(f_\theta(x), y)$ of the loss $\ell$ are finite.

\textbf{Order.}
For a learner, the defining feature of
	the ordering $\chi <  \chi'$ is that
learning with higher confidence ($\chi'$) can done by first 
making the more conservative, lower-confidence ($\chi$) update, followed by a nontrivial residual update.

\begin{LrnAxioms}[nosep]
	\item 
	 $\exists \delta : \{ (\chi', \chi) : \chi' \ge \chi \} \times \Theta \to \confdom$ continous such that $\Lrn_\phi(\delta(\chi',\chi, \theta),\Lrn_\phi(\chi,\theta)) = \Lrn_\phi(\chi',\theta)$
	 and $\delta(\chi,\chi,\theta) = \bot$.
	 \label{ax:ineq-witness}
	 \label{ax:seq-for-more}
	\commentout{%
	\item
	$\forall \theta,\chi,\chi'.\quad$
	$\chi < \chi'$ 
        $\quad\implies$ \\
        \phantom{a}$\quad
		 \exists \chi''\!.\, \bot {<} \chi'' {\le} \chi' \text{ and}$
		$\Lrn_\phi^{\chi''} \!{\circ}\, \Lrn_\phi^\chi (\theta) = \Lrn_\phi^{\chi'}\!(\theta)$.
        \label{ax:ineq-witness}
		\label{ax:seq-for-more}
	}%
	\commentout{%
	\item[\cref*{ax:seq-for-more}$^{<}$]
	$\forall \phi, \theta,\chi,\chi'.\quad$
	$\chi < \chi'$ 
        $\quad\iff\quad$
        $\exists \chi'' < \chi'.~~$
		$\Lrn_\phi^{\chi''} \circ \Lrn_\phi^\chi (\theta) = \Lrn_\phi^{\chi'}(\theta)$.
		\label{ax:ineq-witness-strict}
		\label{ax:seq-for-more-strict}
	}%
\end{LrnAxioms}

Furthermore, learning is not cyclic: if learning with confidences $\chi_0$ and $\chi_1$ have the same effect, then the same is true of all confidences $\chi_0 \le \chi \le \chi_1$ between them.

\begin{LrnAxioms}
	\item If $\chi_0 \le \chi \le \chi_1$ and $\Lrn_\phi(\chi_0, \theta) = \Lrn_\phi(\chi_1, \theta)$, then $\Lrn_\phi(\chi,\theta) = \Lrn_\phi(\chi_0, \theta)$.
		\label{ax:acyclic}
\end{LrnAxioms}

\textbf{Independent Combination.}
$\Lrn$ should be used to incorporate information to the extent that it is novel,
i.e., information that is not already accounted for in our prior beliefs.
Thus, we would like a sequence of two independent
observations in the same observation $\phi$ to
be equivalent to a single observation of $\phi$ 
with their combined degree of confidence.

\begin{LrnAxioms}[nosep]
	\item 
	$\!\forall \phi, \chi,\chi'.~
	\Lrn_\phi(\chi, \Lrn_\phi(\chi', \theta)) =  \Lrn_\phi( \chi\cseq\chi',\theta)$%
        \label{ax:combinativity}%
\end{LrnAxioms}
\vnew{%
\cref{ax:combinativity} appears to be a rather strong assumption. 
Since $\top$ is absorbing,
for example, \cref{ax:combinativity}
implies
\cref{ax:idemp}.
In the language of algebra, \cref{ax:zero} and \ref{ax:combinativity} (and \cref{ax:cont-and-smooth}) together require $\Lrn_\phi$ to be a (smooth) {action} of the monoid $(\confdom, \cseq, \bot)$ on $\Theta$.
}%
However, if we are free to chose the confidence domain, \cref{ax:combinativity} imposes no other restrictions on $\Lrn$ 
	(see \cref{prop:free-additivity} in the appendix). 
It is also easy to verify that the confidences $\alpha$ and $n$ of \cref{ex:prob-simple,ex:shafer,ex:classifier} satisfy \cref{ax:combinativity}.
Nevertheless, for the canonical domains $[0,1]$ and $[0,\infty]$, 
\cref{ax:combinativity} is indeed a strong assumption.
In fact, of the confidences in \cref{ex:kalman1d}, neither $K$ alone nor $\sigma^2$ satisfy \cref{ax:combinativity} out of the box---but sensor precision $\sigma^{-2}$ does when $K = K_{\text{opt}}$ is the optimal gain, 
	 and the pair $(K,\sigma^2)$ can be combined into a single domain satisfying \cref{ax:combinativity}, as we show in the appendix.
\commentout{\color{gray}%
Keep in mind that \cref{ax:combinativity} applies only for two
	observations of the same statement $\phi$.
}%

\commentout{%
This suggests that we could use $\Lrn$ to define the 
 	belief states in which ``$\phi$ is true'' to be
	image of this projection (i.e., the set of fixed points of $\smash{\Lrn^{\top}_\phi}$);
after all, it is easily shown that learnining $\phi$ with any degree of confidence (i.e., applying $\Lrn_\phi^\chi$) has no effect on these states. 
This illustrates a general point: if the function $\Lrn$ captures the
	belief updating process, we can use it to understand the relationship between $\Phi$ and $\Theta$ at an abstract level.
In \cref{ex:classifier}, for instance, 
		although the network weights $\Theta$ are an uninterpreted subset of some high dimensional space, 
		the training process $\Lrn$ 
		arguably imbues them with meaning by defining
			a connection between them and the training examples.
					
However, to some readers, using $\Lrn$ to define truth may seem backwards.
In a given learning setting, we may already have a sense of
which belief states $\theta$ correspond to full belief in $\phi$---in \cref{ex:prob-simple}, for instance, 
	they are the measures that give $\phi$ probability 1.
In such cases, we may want additional axioms ensuring that 
	any relationships between $\Theta$ and $\Phi$ implicit in $\Lrn$
	are compatible with the ones we already have. 
}
Our axioms so far have been conditions on the separate commitment functions $F: \confdom \times \Theta \to \Theta$, which we have called ``$\Lrn_\phi$'',
	but we have not required that $F= \Lrn_\phi$ have any relationship to observations $\phi$. 
To address this, we must reintroduce the final pieces of our formalism.

\commentout{%
Beyond the data of this monoid, a confidence domain $D$ also has 
	an order ($\le$),
	a geometry,
	and an absorbing top element ($\top$). 
What should it mean for $\Lrn$ to preserve this additional structure? 
}%

\subsection{Observations and Degree of Belief}
	\label{ssec:full-learn}

In a learning setting $(\Theta, \Phi,\confdom)$, suppose that we have a function $\Bel : \Theta \times \Phi \to \confdom$
that associates to a belief state $\theta$ 
	a degree of belief in each statement $\phi$. 
The output of $\Bel$ is an ``epistemic'' confidence rather than a learner's confidence (recall the difference between senses \ref{item:epistemic-conf} and \ref{item:learn-conf} at the end of the introduction).
Nevertheless, such a function $\Bel$ helps capture important intuitions about the role of confidence in the learning process.
To begin, learning $\phi$ with more confidence should lead to more belief in $\phi$.

\begin{LrnBelAxioms}[nosep]
	\item 
	$\forall \phi,\theta,\chi,\chi'.\quad$
	$\chi \ge \chi'
	$\\$
	\implies
	\Bel(\phi, \Lrn(\phi,\chi,\theta)) \ge \Bel(\phi, \Lrn(\phi, \chi', \theta))
	$.
		\label{ax:monotone}
\end{LrnBelAxioms}

We cannot ask for strict monotonicity, however:
if we already fully believe $\phi$ (i.e., $\Bel(\phi,\theta) = \top$),
there is no way to attain a higher degree of belief, we cannot attain a higher degree of belief by learning $\phi$.
Instead, if we fully believe $\phi$, learning $\phi$ should
	have no effect.

\begin{LrnBelAxioms}[nosep]
\item If $\Bel(\phi,\theta) = \top$, then
    $\Lrn(\phi,\chi,\theta) = \theta$. 
    \label{ax:truth-is-enough}
\end{LrnBelAxioms}
Perhaps even more importantly, 
if we learn something with full confidence, then we ought to fully believe it.

\begin{LrnBelAxioms}[nosep]
    \item $\Bel(\phi, \Lrn(\phi,\top,\theta)) = \top$.
        \label{ax:effectiveness}
\end{LrnBelAxioms}

\commentout{%
While
\cref{ax:effectiveness} is certainly desirable,
		it may not always hold in cases of interest.
	In \cref{ex:classifier}, for instance,
		it is natural to set $\Bel(\theta,(x,y)) = f_\theta(y|x)$,
	and there may be a local maximum $\theta$ of the parameterization
		$\theta \mapsto \Bel(\theta,(x,y))$ that is not a global one.
	In this case, there is no continuous monotonic path from $\theta$ to a global maximum $\theta^*$ for which $f_{\theta^*}(y|x) = 1$,
	(i.e., no way to satisfy \cref{ax:monotone,ax:effectiveness,ax:cont-and-smooth}).
}%

While
\cref{ax:monotone,ax:effectiveness,ax:cont-and-smooth}
are serious constraints on $\Lrn$ if $\Bel$ is given,
one can easily define $\Bel$ based on $\Lrn$ so as to ensure that 
\cref{ax:monotone,ax:effectiveness,ax:cont-and-smooth}
hold trivially.
Later on (in \cref{sec:loss-repr}), we will consider an
	axiom (\cref{ax:lb-ascent})
	relating $\Lrn$ and $\Bel$
	that says far more about $\Lrn$ without specifying $\Bel$.

\commentout{%
\textbf{Symmetry.}
We would also like update rules to preserve any joint symmetries between the belief space $\Theta$ and the observation language $\Phi$.
For instance, in \cref{ex:prob-simple}, we would like to require that updates are not sensitive to irrelevant relabelings of points.
Concretely, assume we have some set $\mathrm{Aut}(\Theta, \Phi)$ of 
	structural symmetries
	(in the form of automorphisms $\sigma : (\Theta \sqcup \Phi) \to (\Theta \sqcup \Phi)$)
	that have an action both on belief states ($\sigma(\theta) \in \Theta$) and 
		on observations ($\sigma(\phi) \in \Phi$).
		The symmetry condition can now be captured by:
\begin{LrnAxioms}
	\item
	$\forall \theta,\phi,\chi,~~
	 \sigma
	\in \mathrm{Aut}(\Theta, \Phi)
	.\quad$
$\Lrn(\sigma(\phi), \chi , \sigma(\theta)) = \sigma (\Lrn(\theta,\chi, \phi))$.
	 \label{ax:symmetry}
\end{LrnAxioms}
}

\section{The Confidence Continuum}
	\label{sec:conf-continuum}
We now look deeper into the theory of learners whose confidence domain is a continuum 
(i.e., a connected, totally ordered, one-dimensional manifold with two endpoints). 
\commentout{
Most quantities used in science and everyday life can be measured additively:
if one starts with seven minutes/meters/votes/dollars,
and then gains six
more,
one has thirteen altogether.
}%

With the domain $[0,\infty]$, \cref{ax:combinativity} means $\Lrn$ is \emph{additive},
making it amenable to analogies of weight (e.g., the weight of evidence $w$ in \cref{ex:shafer})
and time (e.g., the number of training iterations $n$ in \cref{ex:classifier}).
Indeed, an additive learner can be implemented so that confidence really does coincide with time: imagine a machine with state space $\Theta$, controlled by buttons labeled by $\Phi$, that, while $\phi$ is pressed, evolves from initial state $\theta_0$ according to $\theta(t) = \Lrn(\phi, t, \theta)$. 
Conversely, this interpretation is coherent only if $\Lrn$ is additive---for otherwise there would exist $t_1,t_2$ such that the machine's state after pressing $\phi$ for $t_1$ seconds followed by $t_2$ additional seconds, would be different from the configuration after holding down $\phi$ for $t_1+t_2$ seconds.

Temporal analogies may not always be appropriate
(as they may clash with other, truer conceptions of ``time''),
yet they have such intuitive force that 
a function
$f: [a,b] \times \Theta \to \Theta$  
	(with $0 \in [a,b] \subseteq \mathbb R$)
satisfying \cref{ax:zero,ax:cont-and-smooth,ax:combinativity}
is known generically as a \emph{flow} \parencite{lee2013smooth}.
\commentout{
    Beyond \cref{ax:diffble,ax:additivity},
    $F$ need only handle full-confidence
    appropriately (i.e., satisfy \cref{ax:idemp,ax:cont})
    in order to satisfy all of our axioms thus far.}%
\commentout{%
	\begin{linked}{prop}{continuum-seqacyc}
		When $\confdom$ is a continuum, \cref{ax:zero,ax:cont-and-smooth,ax:combinativity} imply \cref{ax:seq-for-more} and \cref{ax:acyclic}.
	\end{linked}
	}%
Since \cref{ax:combinativity} implies \crefrange{ax:seq-for-more}{ax:acyclic} for this domain, 
the only additional requirement of a commitment function is that $\Lrn_{(\theta,\phi)}(\chi)$ have a well-defined limit as $\chi \to \infty$. 
This highlights the strength of the assumption that confidence lies in $[0,\infty]$ and combines additively, so one might understandably worry that this could limit applicability---but this is not the case.
While 
the additive domain $([0,\infty], +)$ certainly restricts
how confidence can be measured, it has little effect on what confidence can express.

\commentout{%
\begin{linked}{theorem}{add-reparam}
	If $\Lrn 
	$ 
	satisfies \cref{ax:zero,ax:cont-and-smooth,ax:seq-for-more,??}
	(but possibly not \cref{ax:combinativity})
	then there exists
	a flow update function
	$^+\!\Lrn$
	for the additive confidence domain $[0,\infty]$
	that does, 
	and a continuous function
	$g : \Phi \times [\bot,\!\top] \times \Theta \to [0,\infty]$
	such that
	\[
		\forall \theta,\phi,\chi.\qquad
		\Lrn( \phi,
			\chi,
		 \theta )
		 =
		{^+}\!\Lrn(\phi,~
		g(\phi,\chi,\theta),~
		 \theta)
		 \qquad\text{and}\qquad
		{^+}\!\Bel(\phi,\theta) = g(\phi, \Bel(\phi,\theta),\theta)
		.
	\]
	Furthermore,
	 $(^+\!F, g)$ is unique up to a multiplicative factor
	in the output of $g$.
	\end{linked}
\begin{coro}
There is a unique pair $(^+\!\Lrn, \beta)$
such that $^+\!\Lrn$ and $\Lrn$ have the same effect on observations
made with sufficiently low confidence, 
i.e., $\frac{\partial \beta}{\partial \chi}\big|_{\chi=\bot} = 1$.
\end{coro}
}%

\begin{linked}{theorem}{add-reparam}
	If $\confdom$ is a 
	continuum and $F : \confdom \times \Theta \to \Theta$ is a commitment function 
	(i.e., satisfies \cref{ax:zero,ax:cont-and-smooth,ax:seq-for-more,ax:acyclic,ax:combinativity}
	\unskip),
	then there exists a continuous ``translation'' function $g : \confdom \times \Theta \to [0,\infty]$,
	and a commitment flow $^+\!F$ such that
	$
		\forall \theta, \chi 
		.~
		^+\!F(g(\chi,\theta), \theta) = F(\chi, \theta). 
	$
\end{linked}

Thus, updates performed with $\Lrn$ are equivalent
to updates performed with ${^+}\!\Lrn$ (its \emph{additive form}), 
	if confidences are translated (via $g$) appropriately.
\commentout{%
\begin{defn}
We call an update function $F$ \emph{uniform} if 
the additive form
$g(\phi,\chi,\theta)$
of its confidence depends only on $\chi$
(and not on $\theta$ or $\phi$). 
\end{defn}

\cref{ax:additivity} implies uniformity, as then $^+\!F = F$ 
and
$g(\phi,\chi,\theta) = \chi$.
}%
When the original domain $\confdom$ is isomorphic to the canonical domains $[0,\infty]$ and $[0,1]$,  the translation $g$ need not depend on $\theta$ and there is a unique such representation, up to a multiplicative constant in the output of $g$. 
However, by allowing for a belief-state-dependent confidence translation, our construction provides in principle an additive representation even for very different confidence domains, such as when $\cseq$ is not invertible (e.g., $\cseq = \max$)---provided we can handle any points of non-differentiability.
This is often (but not necessarily) possible.

\vnew{%
\begin{example}
Consider a learner for the confidence domain $([0,1],\max)$, whose belief state $\theta \in [0,1]^\Phi$ is a function assigning a numerical degree of belief in each proposition, that, upon learning $\phi$, updates its posterior belief in $\phi$ according to 
$	\Lrn(\phi, \chi, \theta)(\phi) = \max \{ \chi, \theta(\phi) \}
$
while maintaining the prior belief in all other $\phi' \ne \phi$. 
Applying the construction behind the proof of \cref{theorem:add-reparam} yields the additive form
\begin{align*}
	{^+\!\Lrn}(\phi, t, \theta) &= \theta(\phi) + (1-\theta(\phi))(1- e^{-t}) \\
\intertext{with confidence translation}
	g(\phi, \chi, \theta) &= \begin{cases}
		0 &\text{ if } \chi \le \theta(\phi) \\
		\log \Big(\frac{1-\theta(\phi)}{1-\chi}\Big) &\text{ if } \chi \ge \theta(\phi).
	\end{cases}
	\qedhere
\end{align*}
\end{example}
}%

The key to proving \cref{theorem:add-reparam} is realizing that commitment flows can be equivalently represented by vector fields. This view, which we now unpack, confers other benefits as well. 

\subsection{Orderless Combination and the Vector Field Representation}
\label{sec:vecrep}

Is it the same to learn $\phi_1$ and then $\phi_2$ as it is to learn them in the opposite order? 
Order of learning does not matter for belief functions 
(\cref{ex:shafer}) or when conditioning (provided one never learns contradictory events). 
But, in general, 
the order of observations can have a significant impact on the result.
Humans tend to have a recency bias: more recent observations have a stronger influence on beliefs.
\Cref{ex:prob-simple,ex:kalman1d} are not commutative either.
But if the order matters for our update, what should we do if we receive two pieces of information simultaneously?
There  is a natural way to do this
	with the techniques used to prove \cref{theorem:add-reparam}.

\commentout{
We now turn to an equivalent representation of 
flow
update functions, which, among
other things, will ultimately
yield a natural way of
orderlessly learning $\phi_1$ and $\phi_2$ together, and weighted by relative
confidence. 
At a technical level, we show how to
extend an arbitrary update function $F$, that handles inputs $\Phi$,
to handle a more expressive set of inputs $\ext\Phi \supseteq \Phi$
closed under new operations of
orderless combination ($\cseq$), and rescaling by relative confidence ($\cdot$).
}%

Since $\Theta$ carries a differentiable structure,
it makes sense to talk about its tangent space
$T\Theta$,
which consists of pairs $(\theta, \mat v)$ where
$\theta \in \Theta$, and $\mat v$,
intuitively, is a direction that one can travel in $\Theta$ beginning at $\theta$
\parencite[\S3]{lee2013smooth}.
A \emph{vector field} $X \in \mathfrak X\Theta$ is a
differentiable
map $X : \Theta \to T \Theta$
assigning to each  $\theta \in \Theta$
a vector $X(\theta) = (\theta, \mat v) \in T\Theta$
tangent to $\theta$.
\commentout{
	The set of all vector fields over $\Theta$ is denoted $\mathfrak X(\Theta)$
	 and forms a vector space.
	\parencite[\S8]{lee2013smooth}
	\unskip.
	}%
\Cref{ax:seq-for-more}
implies that the behavior of $\Lrn$ is generated by the
way it handles updates of small confidence.
So, in a sense, all we need to know about
$\Lrn$
is how it handles infinitessimal confidences
\unskip---which can be
viewed as a vector field.
More precisely, in most cases (such as when using either the fractional or additive confidence domains),
a commitment function
$\Lrn_\phi$
can be represented by
the vector field
\begin{equation}
	\Lrn'_\phi
	:=
	\theta \mapsto
	\frac{\partial}{\partial \chi} \Lrn(\theta, \chi, \phi) \Big|_{\chi=\bot}
	\qquad\in 
	\mathfrak X \Theta
	.
	\label{eq:f-field}
\end{equation}
(To handle edge cases involving the zero field, we may need a more complex but closely related definition; see the proof of \cref{theorem:add-reparam} for details.)
We can then recover $^+\!\Lrn_\phi$ as the integral curves of $\Lrn'_\phi$
\parencite[Thm 9.12]{lee2013smooth}.
\commentout{%
\begin{fact}[{\citeauthor[Thm 9.12]{lee2013smooth}}]
	If $X \in \mathfrak X(\Theta)$,
	there is at most one
	 function
	$f : [0,\infty) \times \Theta \to \Theta$
	satisfying
	$
		f(a, f(b, \theta)) = f(a+b,\theta)
			~~\text{and}~~
		\frac{\partial}{\partial t}
			 f(t,\theta)
			|_{t{=}0}
			\!\!= X(\theta)
	$
	for all $\theta \in \Theta$ and $a,b\ge 0$.
	\label{fact:unique-integral-curves}
\end{fact}
\begin{coro}
	If $\Lrn_{\phi_1}$ and $\Lrn_{\phi_2}: \Theta \times [0,1] \to \Theta$ are distinct,
	then so are $\Lrn'_{\phi_1}$ and $\Lrn'_{\phi_2}
	$.
	\label{fact:unique-flow-for-vfield}
\end{coro}
}%
It may seem counter-intuitive that the vector field $\Lrn'_\phi$,
which does not mention confidence at all, alegedly captures confidence---%
\unskip but it does, intuitively, by specifying
everything about the learning process {except} for the degree of confidence itself.
\commentout{
This vector field representation is useful for two reasons:
at a practical level, it gives us a natural extension of $\Phi$
that allows us deal with ``mixtures'' of observations and commonly arise.
At a deeper level, it will enable us to describe and classify
the flow update functions on $\Theta$.
}%

The fact that it makes sense to add vector fields (and the result does not depend on the order of addition) suggests a way of handling simultaneous parallel observations. 
\commentout{%
We now return to orderless combination of observations.
One key property of vector fields is thier closure under linear combination---and since
commitment flows and vector fields are equivalent, 
we can extend this linear structure to observations themselves.
This gives us a natural way to combine observations in parallel.
}%
\commentout{%
There are two aspects of linearity: scalar multiplication, and addition.
From scalar mutliplication, we get a way of rescaling
inputs
by a ``relative confidence'' $k
$.
Concretely, given $\phi \in \Phi$ and $k \in (0,\infty)$,
define a new observation
$k\cdot\phi$
and extend $F$ to
a function $\ext \Lrn$ that handles it by:
$
	\ext \Lrn^{\chi}_{k\cdot\phi}(\theta) := \Lrn^{k\chi}_{\phi}(\theta)
$
\unskip, or equivalently, $
	\Lrn'_{k\cdot \phi} := k \Lrn'_{\phi}
	.
$
The rescaled input
$k\cdot \phi$ behaves the same way that $\phi$
does for extreme values of confidence,
since $k 0 = 0$ and $k\infty = \infty$.
This is precisely the same degree of freedom as exposed in  \cref{prop:az-iso}.
}%
Given $\phi_1, \phi_2 \in \Phi$, we can form a new observation
$\phi_1 \oplus \phi_2$ and extend $\Lrn$ to handle it implicitly, via its vector field representation, according to
$\Lrn'_{\phi_1 \oplus \phi_2} := \Lrn'_{\phi_1} + \Lrn'_{\phi_2}$.
Standard existence theorems and uniqueness theorems for ordinary differential equations then apply, delivering a commitment function in additive form, modulo the following two caveats:
(1)
$\lim_{t \to \infty} \Lrn^{t}_{\phi_1 \oplus \phi_2}$ may not exist in some cases,
in which case we cannot continuously extend $\Lrn_{\phi_1 \oplus \phi_2}$ to handle full confidence, and
(2) $\Lrn_{\phi_1\oplus\phi_2}$ might not satisfy \cref{ax:acyclic}.
We leave $\phi_1 \oplus \phi_2$ undefined in such cases, 
but point out that satisfying axiom \cref{ax:lb-ascent} (the subject of \cref{sec:loss-repr})
suffices to prevent both problems. 
\commentout{%
\begin{prop}
	If $\Lrn$ is a flow update function
	then the following are equivalent:
	\begin{enumerate}
		\item $\Lrn_{\phi_1}^{\chi_1} \circ \Lrn_{\phi_2}^{\chi_2} =  \Lrn_{\phi_2}^{\chi_2} \circ \Lrn_{\phi_1}^{\chi_1}$
		for some $\chi_1, \chi_2 \notin \{\bot,\top\}$.
		\item $\Lrn_{\phi_1}^{\chi_1} \circ \Lrn_{\phi_2}^{\chi_2} =  \Lrn_{\phi_2}^{\chi_2} \circ \Lrn_{\phi_1}^{\chi_1}$
		for all $\chi_1, \chi_2 \notin \{\bot,\top\}$.

		\item The vector fields $\Lrn'_{\phi_1}$ and $\Lrn'_{\phi_2}$ commute.

		\item
			For all $\chi \in \mathbb R$, 
			$\Lrn^{\chi}_{\phi_1} \circ \Lrn^{\chi}_{\phi_2} = \Lrn^\chi_{\phi_1\oplus\phi_2}$.
	\end{enumerate}
	If this condition holds, then $\phi_1$ and $\phi_2$ are said to \emph{commute}.
\end{prop}
}%

\commentout{
	Observations $\phi_1$ and $\phi_2$ \emph{commute} at $\theta$ iff $\Lrn_{\phi_1}^{\chi_1} \circ \Lrn_{\phi_2}^{\chi_2}(\theta) =  \Lrn_{\phi_2}^{\chi_2} \circ \Lrn_{\phi_1}^{\chi_1}(\theta)$
for all $\chi_1, \chi_2 \neq \top$. 
}
\commentout{%
\begin{prop}
	If $\Lrn^{\chi}_{\phi_1} \circ \Lrn^{\chi}_{\phi_2} =
		\Lrn^{\chi}_{\phi_2} \circ \Lrn^{\chi}_{\phi_1}$,
	then both updates are equal to
	 $\Lrn^{\chi}_{\phi_1 \oplus \phi_2}$. %
	\commentout{That is,
	\[
		F^{\chi}_{\phi_1}( F^{\chi}_{\phi_2}(\theta))
		=
		F^{\chi}_{\phi_2 \oplus \phi_1} (\theta)
		=
		F^{\chi}_{\phi_1 \oplus \phi_2} (\theta)
		=
		F^{\chi}_{\phi_1}( F^{\chi}_{\phi_2}(\theta))
		.
	\]}
\end{prop}
}%

We now illustrate what we've just done by example. 
\begin{example}
Recall the learner of \cref{ex:classifier}, a classifier that updates with gradient descent. 
In this case, simultaneous parallel observations correspond to estimating gradients using a mini-batch of training samples, rather than one at a time. This standard practice is known to stabilize training. Indeed, use of the full gradient across all training examples amounts to their simultaneous observation in this sense. 
\end{example}
\begin{example}
	Recall the learner of \cref{ex:prob-simple} that linearly interpolates towards conditioning. 
	Learning $A$ and $B$ in sequence with high confidence amounts to conditioning on their intersection, and is undefined when $A \cap B = \emptyset$. 
	At lower confidence values, however, the order of the observations matters 
	(i.e., $A$ and $B$ do not commute) 
	except at prior belief states $P$ according to which $A$ and $B$ are independent, i.e., $P(A \cap B) = P(A)P(B)$. 
	Yet our construction gives us a natural way of simultaneously observing events that are not independent---even ones that are contradictory.
	For example, we can learn $A \oplus \lnot A$.
	At high confidence, this leads to the posterior $\frac12 (P|A) + \frac12 (P|\lnot A)$, reflecting maximal uncertainty about the truth of $A$. 
\end{example}

Clearly $\phi_1 \oplus \phi_2 = \phi_2 \oplus \phi_1$ when either is
defined, so $\oplus$ provides a way of combining observations
orderlessly, even in cases where $\phi_1$ and $\phi_2$ do not commute%
---and when they do, $\phi_1\oplus \phi_2$
is equivalent to observing $\phi_1$ and $\phi_2$ in either order.
This follows from following proposition, which demonstrates that $\phi_1\oplus\phi_2$ can be thought of as an infinitely
fine interleaving of low-confidence $\phi_1$ and $\phi_2$ updates.

\begin{linked}{prop}{linterleave}
	Suppose $\Lrn_{\phi_1}$ and $\Lrn_{\phi_2}$ are commitment flows.
	For $t \in [0, \infty]$, 
	let
	$L_t := \Lrn_{\phi_2}^t \circ \Lrn_{\phi_1}^t
	$
 	denote
	learning $\phi_1$ followed by $\phi_2$ (both with confidence $t$), 
	and
	for $n \in \mathbb N$, let
	$L_t^{(n)}(\theta) := L_t \circ\cdots\circ L_t(\theta)$
	denote $n$ repeated applications of $L_t$.
	Then
	$
		\Lrn_{\phi_1 \oplus \phi_2}^\chi(\theta) =
			\lim\limits_{n \to \infty} L_{\nicefrac\chi n}^{(n)}(\theta)
		.
	$
	\onlyfirsttime{\unskip\footnote{
	For completeness, note that \cref{prop:linterleave} is closely related to the \emph{Lie-Trotter product formula} \citep{trotter1959product,cohen1982eigenvalue}, and can be viewed as an interpreted instantiation of it.
	}}
\end{linked}

\commentout{%
\paragraph{Vector Field Representations and Control Theory.}
In many ways, 
	the assumptions we have made in \cref{sec:vecrep}
	have lead our framework to resemble a dynamical system. 
We have a continuous manifold of states $\Theta$, a set of 
	of inputs (``control signals'') $\Phi$, which cause $\Theta$ 
	to evolve ``over time''. 
However, there are two critical differences. 
First, control theory does not require the analogue of a ``full-confidence'' update; there may be no limit as $t \to \infty$.
Thus, while control theory must apply to arbitrary dynamical systems, 
	the theory of confidence needs only describe those which describe motion that uniformly approaches a fixed point. %
Second, while ``time'' has a single clear interpretation in control theory, our analogue of additive confidence is only well-defined up to a multiplicative constant.
In some cases, the analogy can break down significantly---%
	observing $\phi_2$ after learning $\phi_1$ with full confidence, for instance, 
	extends ``time'' past $t=\infty$.
Finally, we mention that in some contexts, it is clearer to think of confidence in the range $[0,1]$, never adopting a temporal analogy at all. 
}%
\commentout{
Going back through our examples:
\begin{description}
	\item[{\bf[\cref{ex:prob-simple}]}]
		$g(\mu, \alpha, \phi) = - \log(1-\alpha)$.
		This means that
		$^+\!F(\mu, \beta, \phi) = e^{-\beta} \mu + (1-e^{-\beta}) (\mu\mid \phi)$.
		
	\item [{\bf[\cref{ex:shafer}]}] 
		Weight of evidence $w$ is already additive, so
			$g(m, w, \phi) = w$, and $^+\!F = F$. 
		Meanwhile, degree of support $\alpha$ is translated
		the same way as $\alpha$ in the first example: in this case,
			$g(m, \alpha, \phi) = - \log(1-\alpha)$. 
		
		\commentout{
		As noted in the introduction,
		the restriction of this update rule to belief states
		that are probabilities, gives an update rule}

	\item [{\bf[\cref{ex:classifier}]}] 
		($n$ is already additive).
\end{description}
}%

\subsection{Optimizing Learners}
\label{sec:loss-repr}

\def\GD#1{\mathtt{GradFlow}[#1]}
\def\NGD#1{\mathtt{NGF}[#1]}

We have now seen how learners satisfying certain axioms
can be represented as vector fields (\cref{sec:vecrep}).
A particularly important way of specifying a vector field is via the gradient of a potential.
This is especially true in modern machine learning, where training is idealized as loss-minimizing gradient flow \citep{arora2018convergence}, and where the substantial
advances of the last two decades have repeatedly demonstrated value of casting learning as optimization \citep{sra2011optimization}.
Our framework allows us to express this idea as a simple relationship between $\Lrn$ and $\Bel$:

\begin{LrnBelAxioms}[nosep]
	\item $\displaystyle
	\frac{\partial}{\partial \chi} 
		{\Lrn}
		(\phi, \chi, \theta)
	= \nabla_{\mskip-2mu\theta }
		\,
		{\Bel}
			(\phi, \theta)
	$ \label{ax:lb-ascent}
\end{LrnBelAxioms}

\Cref{ax:lb-ascent} says that learning occurs by gradient ascent (%
i.e., using some measure of disbelief in observations
as a loss):
that learning is fundamentally (just) about locally increasing degree of belief---no more, and no less.
It also gives us a way of turning $\Bel$ (whose output is an epistemic confidence) into a commitment flow $\Lrn$ (which takes a learner's confidence as input), which may have contributed to any ambient confusion about the distinction between the two readings of the word ``confidence''. 
Unlike \cref{ax:effectiveness,ax:monotone,ax:truth-is-enough}, 
	\cref{ax:lb-ascent} imposes serious constraints on $\Lrn$ even if we are free to select $\Bel$.
We say $\Lrn$ is \emph{optimizing} if there exists some $\Bel$ such that the pair satisfy \cref{ax:lb-ascent}. 
This way of constructing a learner has another benefit: the flows formed from such vector fields are guaranteed to have limits and satisfy \cref{ax:acyclic}, meaning that orderless combination $\oplus$ is always well-defined.

\commentout{
To do this at full generality, we need to make sense of a gradient, which requires more structure, in the form of a Riemannian metric.  It turns out that, up to a multiplicative constant, there is a unique natural Riemannian metric on any parameterization of a probability distributions \cite{chentsov}; taking gradients with respect to this geometry, show how familiar loss functions on probability measures correspond to different standard notions of confidence in the other representations.

Suppose we have:
\begin{enumerate}[nosep]
	\item A differentiable loss function $\mathcal L : \Theta \times \Phi  \to \mathbb R$, which intuitively measures the ``incompatibility'' between a belief state $\theta$ and an assertion $\varphi$, and
	\item
		A way of taking the gradient of ${\cal L}$ with respect to $\theta$,%
			\footnote{
			such as a tangent-cotangent isomorphism $(-)^\sharp : T^*_p\Theta \to T_p \Theta$, perhaps coming from an affine connection, in turn perhaps coming from a Riemmannian metric.}
        so as to obtain a vector field on $\Theta$ which optimizes $\mathcal L.$
\end{enumerate}

Then we can define an update rule $\GD {\cal L}$ that reduces inconsistency by gradient flow (the continuous limit of gradient descent). Concretely, such an update rule has a vector field:
\[
	\GD {\cal L}'_\phi(\theta) = - \nabla_\theta {\cal L}(\theta,\phi).
\]

\begin{prop}
	An update rule $F$ on a Riemannian manifold $\Theta$ is optimizing update rule if and only if $(F')^\flat$ is a conservative co-vector field.
	\cite[Prop 11.40]{lee2013smooth}
\end{prop}
}

Technically, to view the derivative of a function $\ell : \Theta \to \mathbb R$ as a vector field $\nabla \ell \in \mathfrak X\Theta$ (rather than a co-vector field), one needs more than a manifold structure on $\Theta$; we will assume that $\Theta$ comes with what is called a \emph{Riemannian Metric}. 
The details are unimportant; 
what matters is that we can always fall back on the Euclidean metric for subsets of $\mathbb R^n$, and that some other spaces (such as parametric families of distributions), have a different natural metric.

\textbf{Optimizing Commitment for Probabilistic Beliefs.}
In many learning settings of interest,
beliefs $\theta \in \Theta$ are associated
with probability distributions $P_\theta \in \Delta \Omega$ over some measurable space $\Omega$. 
Fortunately, this gives us a natural Riemannian metric on $\Theta$---which, as explained above, is precisely what we need in order to make sense of gradients on a manifold. 
Specifically, the \emph{Fisher Information Metric} (FIM) induced by the parameterization $\theta \mapsto P_\theta$ 
turns out to be the unique metric (up to scalar multiple) that 
	is invariant under sufficient statistics
	\citep{chentsov}%
\footnote{
For instance, 
if $X$ and $Y$ take values in $\Omega$, and 
$p(Y|X)$ and $q(X|Y)$ are such that
$P_{\theta}(X) = q \circ p \circ P_\theta(X)$ 
for all $\theta$,
\commentout{
as depicted by the in commutative diagram
	\[
	\begin{tikzcd}[ampersand replacement =\&]
		\Theta \ar[r,"\Pr"]\ar[d,"\Pr"']
			\& X \\
		X \ar[r,"p"'] \& Y \ar[u, "q"']
	\end{tikzcd}, 
	\]
}%
then clearly the family $P_\theta(Y) := p\,\circ\,P_{\theta}(X)$ carries the same information about the parameters (and how to update them) as does $P_\theta(X)$.
Chentsov's theorem (\citeyear{chentsov}) tells us that the FIM 
	is the only Riemannian metric on $\Theta$ (as a function of the parameterization $\theta \mapsto P_\theta$), 
	that is the same whether derived from $P_\theta(X)$ or $P_\theta(Y)$. 
}%
---a finding that has lead many to use the term \emph{natural gradient} for gradients in this geometry, and formed the basis of Information Geometry
\citep{amari1998natural,amari2000methods}.

\setlength{\footnotesep}{2.5ex}
Using this representation-invariant geometry (whose implementation details we relegate to a very technical footnote
\unskip\footnote{%
	A \emph{Riemannian metric} consists of an inner product $\langle\cdot,\, \cdot\rangle_\theta : T_\theta\Theta \times T_\theta\Theta \to \mathbb R$ on tangent vectors at each point $\theta \in \Theta$; 
		it can therefore be viewed as a matrix $G(\theta)$ with components $G(\theta)_{i,j} = \langle e_i, e_j \rangle_\theta$, where $\{e_i\}$ are basis vectors of the tangent space $T_\theta \Theta$.
	The gradient of a function $f : \Theta \to \mathbb R$ in this geometry is then given by
	$
		\nabla_{\!\theta} 
			f(\theta)
			= G(\theta)^{\dagger}  \frac{\partial f}{\partial \theta}^{\mathsf T}(\theta)
	$
	where $G(\theta)^{\dagger} $ denotes the psuedoinverse%
	\commentout{%
		\footnote{Much of the literature assumes that the matrix $G$ is non-singular, and hence uses the inverse instead.  Many of our examples are non-singular for uninteresting reasons, and it suffices to use the pseudo-inverse here. 
		Much more can be said of the singular case; that is the domain of Singular Learning Theory \cite{slt}.}
	}
	of the matrix $G(\theta)$
	and $\frac{\partial f}{\partial \theta} = [\frac{\partial f}{\partial \theta_1}, %
	\ldots, \frac{\partial f}{\partial \theta_n}]$ is the 
	(co)-vector of partials (i.e., the transpose of the gradient of $f$ in the Euclidean metric, which is sensitive to the choice of coordinates).
	In the special case where $\Theta = \Delta W$ is itself the set of probability distributions over a finite set $W = \{1, \ldots, n\}$
	and $P_\theta = \theta$,
	the simplex representation $\theta = P = (p_1, \ldots, p_n) \in \Theta$
	with $\sum_{i} p_i = 1$ and $p_i \ge 0$,
	the Fisher Information Matrix is given by
	$G(P) =  \mathrm{diag}(\frac{1}{p_1}, \ldots, \frac{1}{p_n})$.
}),
we now revisit the examples from \cref{sec:intro}.

\begin{itemize}[wide,itemsep=0pt,topsep=0pt]
\item The update process of \cref{ex:prob-simple} can be shown to be the optimizing for log probability $\Bel(P, \phi) = \log P(\phi)$. In other words, it is about minimizing surprisal.

\item 
In \cref{ex:shafer}, $\Lrn(\Bel, \alpha, \phi) = \Bel \oplus \Bel_{(\alpha,\phi)}$
	is not optimizing in general; assuming \cref{ax:lb-ascent} often violates the equality of mixed partial derivatives. 	
However, in special case where the belief state $\Bel = \Plaus \in \Theta$ is restricted to probability measures, $\Lrn$ is optimizing with objective $\Bel(\Bel, \phi) = \Bel(\phi)$, perhaps atoning for the clash of symbols.
This differs from \cref{ex:prob-simple} only by a strictly increasing monotone function, which is why the two update rules differ only by reparameterization.
This is also the \emph{Bayesian} objective, as we will see in \cref{sec:Bayes}.

\item The learner in \cref{ex:classifier} is, by definition, an optimizing learner for $\Bel(\theta, (x,y)) = - \ell(\theta, x,y)$ to minimize loss.

\item In \cref{ex:kalman1d},
	the field generated at $K = 0$ is the gradient of
	$\Bel((\hat x, \sigma^2), z) = \frac12 (\hat x-z)^2 + \sigma^4$.
\end{itemize}
\textbf{Expected-Value Optimizing Learners.}
Having fixed the geometry on $\Theta$, 
	there is a 1-1 correspondence between optimizing commitment flows (those that satisfy \cref{ax:lb-ascent}) 
	and loss functions $\mathcal L = -\Bel_\phi : \Theta \to \mathbb R$
	up to an additive constant. 
One class of such functions stands out as a natural starting point for our investigations:
	the linear ones 
	$P \mapsto \Ex_P[V]$, 
	that is,
	expectations of of random variables $V : W \to \mathbb R$.
When $W = \{1, \ldots, n\}$, these functions are parameterized by vectors $\phi = V \in \mathbb R^n$. 
So, what learning procedure is induced by \cref{ax:lb-ascent} for linear beliefs?

\begin{linked}{prop}{boltz-expect-fields}
	Suppose $\Theta = \Delta W$
	and $\Phi$ consists of random variables $V: W \to \mathbb R$. 
	The flow form of the optimizing learner
	that has $\mathcal L = - \Bel(P, V) =  \Ex_{P}[V]$ 
	is 
	\vspace{-1ex}
	\begin{align*}
		&\Boltz
			(P, \beta, V)
			(w) :\propto
				P(w) \exp(-\beta V(w))
		\commentout{\\
			&= A \mapsto \frac
				1{\Ex_{P} [ \exp(-\beta V)]}
				{\int \exp(-\beta V(x)) \mathbbm 1_A \mathrm d P(x) }
		}
		.
	\end{align*}	
\end{linked}

This is also known as the \emph{softmax} distribution (relative to the base measure $P$)
with logits $V$ and temperature $1/\beta$.
Intuitively, larger confidence $\beta$
	reflects increasingly certainty 
	in states $w$ that have low potential $V(w)$. 
Indeed, using $\Boltz_V$ to update a distribution $P$ 
	with high confidence ($\beta \to \infty)$ 
	conditions $P$ on the minimizer(s) of $V$.
So for this learner, confidence ``tempers'' the distribution 
and coincides with the concept of thermodynamic coldness.

\commentout{%
In this thermodynamic analogy, one becomes more certain that particles are in their lowest energy states as temperature decreases. }

\begin{linked}{prop}{Boltz-props}
	\begin{enumerate}[label=(\alph*), parsep=0pt,itemsep=0.5ex]
	\item $\Boltz$ satisfies \cref{ax:zero,ax:seq-for-more,ax:cont-and-smooth,ax:acyclic,ax:combinativity}.
	\item $\Boltz$ updates commute and are invertible iff $\beta < \infty$.
	\item $\Boltz_{U \oplus V} = \Boltz_{U+V}$\ .
	\item $\Boltz_{V_1}^{\beta_1} \circ \cdots \circ \Boltz_{V_n}^{\beta_n} (P) = \Boltz(\smash{\sum\limits_{i=1}^n \beta_i V_i}, 1, P)$.
	\end{enumerate}
\end{linked}

Observe how well-behaved these learners are: any sequence of observations in any order is equivalent to a single observation of their weighted sum.
This property may come at a significant cost, however: learning in brains and artificial neural networks exhibits a recency bias, an effect which is arguably optimal for bounded agents
	\citep{wilson2014bounded,fudenberg2014learning}, or in changing environments.

\section{Boltzmann and Bayes}
    	\label{sec:Bayes}

Many believe that ``correctly'' accounting for confidence in updating (probabilistic) beliefs
    is a matter of properly applying \emph{Bayes' Rule (BR)}. 
\vnew{%
To some, this simply means that belief updates are given by conditioning (i.e., $\Lrn(\mu,\top,\phi) = \mu |\phi$ with the trivial confidence domain $\{\bot,\top\}$), in which case BR is a helpful theorem. Others reject that learning necessarily establishes a proposition in the posterior with certainty (at least as far as one's belief state is concerned); for these people, BR describes the update itself. 
We now analyze these accounts of Bayesianism within our framework.
}%
\commentout{%
For a Bayeisian a belief state is a probability 
$P \in \Delta \mathcal H$
over hypotheses
$h \in \mathcal H$,
each of which encodes a distribution 
$P(\phi \mid h)$
over possible observations.
}%
\commentout{%
\unskip---so we also have conditional probabilities of the form $P(\phi \mid h)$. 
We now explore this important special case within our formal framework.
}%

\begin{defn}
    $\Lrn$ is \emph{Bayesian}
    iff
    \begin{enumerate}
            [label=(\alph*),itemsep=0.2ex,parsep=0pt,topsep=0pt]
        \item 
            belief states 
            correspond to distinct probability distributions over a measurable space
            $\mathcal H$ of hypotheses
            (i.e., there is an injection $\theta \mapsto P_\theta : \Theta \to \Delta \mathcal H$). 
        \item 
            there is
            a measurable space $(\mathcal X, \mathcal A)$
            in which every observation $\phi$ 
            can be viewed as event
            (i.e., $\mathcal A \supseteq \Phi$)
            \unskip;
        \item there is a conditional probability (i.e., a Markov kernel)
            $P(X \mid H) : \mathcal H \to \Delta \mathcal X$,
            associating each hypothesis $h$ with a probability measure over $\mathcal X$;
        \item 
            there exists $\star \in \confdom$ such that, 
            for all $\phi$ and $\theta$,\\
            $
            P_{\!\textstyle\Lrn_\phi^\star(\theta)}\!
            ( h ) {=} P_\theta(h)  P(\phi | h)  / 
                \sum_{h'} \!P_\theta(h') P(\phi | h')
            $
            \unskip.\!\!\!\!
            \qedhere
    \end{enumerate}
\end{defn}

Item (d) is Bayes' rule, and prescribes posterior the posterior belief ``$P(H | \phi)$''.
Note that $\phi$ is not an event in the sample space $\mathcal H$,
    but in the space $\mathcal X$; we regard it as event in
    $\mathcal X \times \mathcal H$
    for the purposes of conditioning the joint measure $P(X,H) := P(X|H)P_\theta(H)$.
To obtain a new belief state of the same type as the original (i.e., a distribution over $\mathcal H$), however, we must also marginalize out $\mathcal X$.
Thus, apart from its effect on the hypotheses, $\phi$ is forgotten after the update.

In the special case where $P(X|H)$ is deterministic (i.e., theories are \emph{complete} enough to determine observations),
the extended sample space $\mathcal H \times \mathcal X$ is not meaningfully different from $\mathcal H$, and we simply update by conditioning
    (as in \cref{ex:prob-simple} with full confidence).
When $P(X|H) > 0$, however, we now prove that Bayesian updates are precisely the expected-value optimizing learners we characterized at the end of \cref{sec:loss-repr}. 
The following may be unsurprising to readers experienced in probabilistic methods, but one direction of the equivalence is subtler than it may appear.

\begin{linked}{prop}{Boltz-Bayes}
	$\Lrn$ is a Boltzmann learner for a potential $v \ge 0$ if and only if it is Bayesian with $P(\cdot \mid \cdot) > 0$. 
\end{linked}

\cref{prop:Boltz-Bayes} 
has a significant implication: 
Bayesian learners are optimizing (i.e., satisfy \cref{ax:lb-ascent}), and correspond  to a very special kind of optimizing learning 
where degree of belief can be viewed as the expectation of a fixed random variable.
This induces significant limitations on how a given belief representation can be used---for example, high confidence updates always lead to the boundary of the probability simplex.
This rules out situations like Jeffrey's rule, for which this is not the case.
This raises some interesting questions. 
Is there a generic way to capture all learners with Bayesian updates (with a necessarily much larger belief space)? 
Alternatively, are some natural learning procedures provably incompatible with the Bayesian frame?

We point out that the use of relative entropy (KL divergence) as the target of optimization, which is non-linear, has proved far more useful in the practice (e.g., for training the classifier in \cref{ex:classifier}).
Starting from optimizing learners whose loss representation is conditional relative entropy leads \citet{mixture-langs} to an alternate natural derivation of \emph{probabilistic dependency graphs} \citep{pdg-aaai}, leading well beyond ordinary probabilistic modeling to capture inconsistency and much of machine learning.

\commentout{%
In retrospect, the theorem may seem obvious;
	translating a Bayesian update to a Boltzmann learner is as simple as taking a logarithm.
	Still, there is some subtlety going the other direction. 
This close relationship between Bayesian updates and Boltzmann reweighting seems 
to be implicitly understood in the literature, 
	but to the best of the our knowledge, has not yet been fully captured.
}%

\section{Conclusion}

Metaphorically: if certainty is black and white, then probability allows for shades of gray, and learner's confidence is about \emph{transparency}. 
The idea is an old one, having been deployed many times before in various contexts;
  this paper unifies the approaches, providing axiomatic grounding for the concept writ large (\cref{ax:zero,ax:cont-and-smooth,ax:seq-for-more,ax:acyclic,ax:combinativity};
    \cref{ax:monotone,ax:effectiveness,ax:truth-is-enough,ax:lb-ascent}).
  We have identified the critical aspects of confidence in a very general setting, and related it to probabilistic notions of confidence (e.g., via \cref{ax:lb-ascent} and \cref{prop:Boltz-Bayes}).
The resulting framework connects many seemingly different representations of confidence and learning,
for an overview of which we invite the reader to revisit \cref{fig:map}.
We contend that this framework clarifies common points of confusion in  literature (see \cref{ssec:full-learn}).

There are many examples and applications of this framework. 
An obvious continuation point---a deeper analysis of which learning functions correspond to which loss functions 
when $\Theta$ is a parametric family of distributions
\unskip---has already born fruit that we were not able to cover here. 

A key question remains open: how should we decide how much confidence to place in an observation? 
With enough modeling assumptions, there can be a clear answer---such as in \cref{ex:kalman1d}, where the optimal Kalman gain is related to the current uncertainty and the variance of the sensor. However, as illustrated by the discussion in \cref{ex:classifier}, one's willingness to be influenced by an observation may not be merely a matter of probabilistic modeling.
This makes the question rather profound; we suspect that the search for a good answer will take us far beyond the present scope.
Having laid the formal and conceptual foundations, we are eager to report back on these projects in the future.

\begin{acknowledgements} %
    I would like to sincerely thank Joe Halpern, my PhD advisor, who contributed significantly to the introduction of this paper, reading dozens of earlier drafts. 
    The work was
    supported in part by AFOSR grant FA23862114029, MURI grant W911NF-19-1-0217, ARO grant
    W911NF-22-1-0061, and NSF grant FMitF-2319186.
    It was also supported by funding from NSERC and \emph{Fond de Recherche du Qu\'{e}bec}.
\end{acknowledgements}

\bibliography{conf}

\newpage

\onecolumn

\title{Learning with Confidence\\(Supplementary Material)}
\maketitle

\appendix

\section{PROOFS OF MAIN RESULTS}

We begin with the claims of the main (i.e. numbered) results. 
For convenience, we repeat the statements of the propositions before proving them. 

\recall{prop:az-iso}
\begin{lproof}\label{proof:az-iso}
    Clearly  $\varphi_\beta$ and $\varphi_\beta^{-1}$ are continuously differentiable, and one can verify with a few steps of simple algebra that the two are inverses. In both cases, the only possible wrinkle is the at the point of high confidence, but there are no problems there either, because:
    \[
        \lim_{s \to 1} \varphi_\beta(s) = \frac1\beta \lim_{s \to 1} \log \Big(\frac{1}{1-s}\Big) = \infty
        \qquad\text{and}\qquad
        \lim_{t \to \infty} \varphi_\beta^{-1}(t) = \lim_{t \to \infty} 1- e^{-\beta t} = 1.
    \]
    
    Next, we show that $\varphi_\beta$ preserves the structure of the confidence domain. 
    We just saw that $\varphi_\beta$ and $\varphi_\beta^{-1}$ preserve the top element $\top$ of both confidence domains.  It is even more immediate that it preserves the bottom element.
    It is also easy to see that both functions preserve the order (i.e., are monotonic).  For example, $\frac{\mathrm d}{\mathrm d s} \varphi_\beta(s) = \frac1{\beta(1-s)} \ge 0$. 

    Next we show that $\varphi_\beta$ and its inverse preserve independent combination $(\cseq)$.
    For $a, b \in [0,1]$, we have
    \begin{align*}
        \varphi_\beta(a \cseq b)
            &= \varphi_\beta(a + b - ab) \\
            &= -\frac1\beta \log(1 - a - b + ab) \\
            &= -\frac1\beta \log((1 - a)(1-b)) \\
            &= -\frac1\beta \log(1 - a) - \frac1\beta\log(1-b) \\
            &= \varphi_\beta(a) + \varphi_\beta(b).
    \end{align*}
    A similar calculation shows, for all $t, u \in [0,\infty]$, that
    \begin{align*}
        \varphi_\beta^{-1}(t) \cseq \varphi_\beta^{-1}(u)
            &= 1 - e^{-\beta t} + 1 - e^{-\beta u} - (1 - e^{-\beta t})(1- e^{-\beta u}) \\
            &= 2 - e^{-\beta t} - e^{-\beta u} - 1 + e^{-\beta t} + e^{-\beta u} - e^{-\beta(u+t)} \\
            &= 1 - e^{-\beta(u+t)} \\
            &= \varphi_\beta^{-1}(u + t). 
    \end{align*}

    Finally, we must show that these are the only isomorphisms between the two confidence domains.
    For this, we refer to a standard argument that is most directly seen as the solution to Cauchy's exponential functional equation $g(x + y) = g(x) g(y)$ after the change of variables $g = 1-f$.
    
    A similar argument is provided by \citet{shannon1948mathematical} in defense of entropy, and a much more direct analogue appears in the form we need by \citet{shafer1976mathematical}, who shows directly that every continuous mappings of $[0,1]$ to $[0,\infty]$ for which multiplication becomes addition in this way, must be of the form $s\mapsto - k \log(1-s)$, for some $k > 0$. 
\end{lproof}

\recall{prop:no-continuous-condition-ext}
\begin{lproof}\label{proof:no-continuous-condition-ext}
    Fix a non-empty subset $\phi \subseteq W$ and consider a function $F : \Delta W \times [0,1] \to \Delta W$
    such that $F(\mu, 0) = \mu$ and $F(\mu,1) = \mu | \phi$ whenever $\mu(\phi) > 0$. 
    Our aim is to show that $F$ cannot be continuous. 

    Fix distribution $\mu_0 \in \Delta W$ with the property that $\mu_0(\phi) = 0$. 
    For each $\delta > 0$, consider the set
    \[
        B_\delta(\mu_0)
            := \{ \mu \in \Delta W : \mathrm{TV}( \mu,  \mu_0) < \delta \}
            = \{ (1-\delta) \mu_0 + \delta P \}_{P \in \Delta W}
    \]
    of distributions within $\delta$ total variation distance
    of $\mu_0$.
    By assumption, $F(-, 1)$ updates by conditioning on $\phi$, which means all mass not on $\phi$ is removed, and the rest is renormalized. More precisely, this means $F((1-\delta) \mu_0 + \delta P, 1) =  P$ for all $\delta \in (0,1)$, and thus
    the image of $B_\delta(\mu_0)$ under $F$ is all of $\Delta W$. 
    Therefore, for every $\epsilon \in (0, 1)$,
    there cannot be $\delta > 0$ such that $\mu \in B_\delta(\mu_0)$ 
        implies $F(\mu, 1) \in B_{\epsilon}( F(\mu_0, 1) )$. 
    Thus $F$ cannot be continuous. 
\end{lproof}

\recall{prop:maximal-continuous-theta}
\begin{lproof}\label{proof:maximal-continuous-theta}
    As noted in the main text, the observation $\phi$ is not mathematically relevant to the argument; to simplify notation, we work with the commitment function $F := \Lrn_\phi : \Theta \times [\bot, \top] \to \Theta$. 
    In this context, the belief space $\Theta$ and confidence domain $\confdom$ both implicitly have topologies. Let $\tau \subseteq 2^\Theta$ denote the topology associated with $\Theta$ 
        (i.e., the collection of all open subsets of $\Theta$). 
    Given $U \subseteq \Theta$, we use the standard notation $F|_U$ to denote the restriction of the function $F$ to domain $U \times \confdom$. 

    By assumption (L2), for each fixed $\theta \in \Theta$, the function $F_\theta : \confdom \to \Theta$ is continuous. 
    Let 
    \[
        \mathcal U :=  \Big\{ U \in \tau  ~\Big|~
            F|_U : U \times \confdom \to \Theta \text{ is continuous }
            \Big\}
    \]
    be the set of all open subsets of $\Theta$ on which the restriction of $F$ is continuous. 
    Since unions of open sets are open, 
    we know that $\Theta_\phi := \bigcup \mathcal U \subseteq \Theta$ is open.
    We now show that it is the maximal open set on which $F$ is continuous, as promised by the theorem.

    Recall that a function $f : X \to Y$ is continuous iff the preimage $f^{-1}(V) = \{ x \in X : f(x) \in V\}$ of an open set $V \subseteq Y$ is itself an open set. 
    Given $V \subseteq \Theta$, observe that
    \begin{align*}
        (\theta, \chi) \in (F|_{\Theta_\phi})^{-1}(V)
        &\iff  \quad \exists U \in \mathcal U.~ \theta \in U ~\text{ and }~ F(\theta, \chi) \in V \\
        &\iff \quad \exists U \in \mathcal U.~  (\theta, \chi) \in (F|_U)^{-1}(V)  \\
        & \iff (\theta, \chi) \in \bigcup_{U \in \mathcal U} (F|_U)^{-1}(V). 
    \end{align*}
    In other words, we have shown that $(F|_{\Theta_\phi})^{-1}(V) = \bigcup_{U\in \mathcal U} (F|_U)^{-1}(V)$.
   
    It follows that the preimage $(F|_{\Theta_\phi})^{-1}(V)$ of an open set $V \subseteq \Theta$ is a union of open sets (since each $F_U$ was assumed to be continuous), and hence itself open.
    Therefore $F|_{\Theta_\phi}$ is continuous, and since $\Theta_\phi$ contains every other open set satisfying that property, it is the maximal such open set. 
\end{lproof}

We will return to \cref{theorem:add-reparam} in \cref{sec:proof-addrep}.
Previously, the following result was in the main text, but we no longer believe it important to state formally; we give it again here for completeness, as it still supports the discussion in \cref{sec:vecrep}.

\begin{linked}{prop}{at-most-one-flow}
	If $\Lrn$ is a commitment flow and $\phi_1, \phi_2 \in \Phi$,
	then there is at most one commitment flow
	$\Lrn_{\phi_1 \oplus \phi_2}
	 	: [0, \infty] \times \Theta \to \Theta$
	such that
	$\Lrn'_{\phi_1 \oplus \phi_2} = \Lrn'_{\phi_1} + \Lrn'_{\phi_2}$.
\end{linked}
\begin{lproof} \label{proof:at-most-one-flow}
    Most of the work is done by an important result in differential geometry: 
    
    \begin{fact}[The Fundemental Theorem on Flows]
        If $X \in \mathfrak X(\Theta)$ is a somoth vector field, then
        there is a unique function
        $f : \mathcal D \to \Theta$
        where $\mathcal D \subseteq \mathbb R \times \Theta$ is maximal,
        satisfying
        $
            f(a, f(b, \theta)) = f(a+b,\theta)
        $
        whenever $(a+b, \theta) \in \mathcal D$,
        and 
        $
            \frac{\partial}{\partial t}
                 f(t,\theta)
                |_{t{=}0}
                \!\!= X(\theta)
        $
        for all $(t,\theta) \in \mathcal D$. 
        \label{fact:unique-integral-curves}
    \end{fact}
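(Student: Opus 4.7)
The plan is to build $f$ by assembling local integral curves of $X$ via the classical existence--uniqueness theory for ordinary differential equations, and then promoting these to a single maximal smooth object. First, around any point $\theta_0 \in \Theta$, I would choose a smooth chart and express $X$ in coordinates as a smooth $\mathbb R^n$-valued function; the Picard--Lindel\"of theorem then yields a smooth local solution to the initial value problem $\dot\gamma = X(\gamma)$, $\gamma(0) = \theta_0$, on some interval $(-\epsilon,\epsilon)$. Invoking the standard smooth dependence on initial conditions upgrades this to a local flow $\psi : (-\epsilon,\epsilon) \times U \to \Theta$ on an open neighborhood $U \ni \theta_0$, smooth jointly in $(t,\theta)$ and satisfying $\partial_t \psi = X \circ \psi$ with $\psi(0,\theta) = \theta$.

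Next, for each $\theta \in \Theta$, I would define the maximal interval of existence $J(\theta) \subseteq \mathbb R$ as the union of all open intervals containing $0$ on which a smooth integral curve through $\theta$ exists. Uniqueness for the ODE (a direct consequence of Gronwall's inequality applied in coordinates) forces any two such curves to agree on the intersection of their domains, so the union glues unambiguously into a single maximal integral curve $\gamma_\theta : J(\theta) \to \Theta$. Setting $\mathcal D := \{(t,\theta) : t \in J(\theta)\}$ and $f(t,\theta) := \gamma_\theta(t)$ then produces the candidate function, with the initial-velocity condition $\partial_t f|_{t=0} = X$ built in by construction.

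To verify the group law $f(a, f(b,\theta)) = f(a+b,\theta)$ whenever $(a+b,\theta) \in \mathcal D$, I would fix $b$ and $\theta$ and compare the two curves $t \mapsto f(t, f(b,\theta))$ and $t \mapsto f(t+b,\theta)$; both satisfy the same ODE with the same initial value $f(b,\theta)$ at $t = 0$, so uniqueness forces them to coincide on their common domain. The same argument, plus the local flow boxes from step one, shows that $\mathcal D$ is open (any interior point can be extended slightly via a local flow) and that $f$ is smooth on $\mathcal D$: near any $(t_0,\theta_0) \in \mathcal D$, cover the compact arc $\gamma_{\theta_0}([0,t_0])$ by finitely many local flow boxes $\psi_1, \dots, \psi_k$ and obtain a smooth joint expression for $f$ near $(t_0,\theta_0)$ by composing them. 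Uniqueness of the pair $(f,\mathcal D)$ subject to maximality is then immediate: any other such $(f',\mathcal D')$ must, by fiber-wise ODE uniqueness, agree with $f$ on $\mathcal D' \cap \mathcal D$, and maximality of $\mathcal D$ forces $\mathcal D' \subseteq \mathcal D$.

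The main obstacle I expect is establishing joint smoothness of $f$ on all of $\mathcal D$ rather than merely pointwise existence of integral curves. The flow-box chaining argument above handles this, but it relies on the fact that any $(t_0,\theta_0) \in \mathcal D$ is connected to $\{0\} \times \{\theta_0\}$ by a compact integral arc along which only finitely many local flows need to be chained. Everything else---local existence, uniqueness, the group law, and openness of $\mathcal D$---then follows routinely from Picard--Lindel\"of and Gronwall.
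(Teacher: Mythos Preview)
Your sketch is correct and is essentially the standard textbook argument: local existence and uniqueness via Picard--Lindel\"of in charts, gluing to maximal integral curves, deriving the group law from ODE uniqueness, and obtaining joint smoothness and openness of $\mathcal D$ by chaining finitely many flow boxes along compact arcs. This is exactly the route taken in Lee's \emph{Introduction to Smooth Manifolds}, Theorem~9.12.

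The relevant point of comparison, however, is that the paper does not prove this statement at all. It is labeled a \emph{Fact} and is explicitly cited as a restatement of Lee's Theorem~9.12 (with the flow axioms inlined); the paper then invokes it as a black box inside the proof of the uniqueness proposition for $\Lrn_{\phi_1\oplus\phi_2}$, and separately remarks that the $C^k$ (rather than $C^\infty$) variant follows by the same techniques, citing Abraham--Marsden--Ratiu. So what you have written is not an alternative to the paper's proof but rather a reconstruction of the result the paper takes for granted. Your argument is fine for that purpose; just be aware that no original proof is expected here.
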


    The statement above is a gloss and selective restatement of the statement of the result as presented by \citet[][Theorem 9.12]{lee2013smooth}, which inlines the definition of a flow (Equations 9.6 and 9.7).
    A further alteration: we are interested in a minor variant in which the vector field $X$ and the function of interest are not necessarily smooth (i.e., infinitely differentiable), but rather merely twice differentiable $(C^2)$. As discussed in Appendix C of Lee and more directly treated by \citet[\S4.1]{abraham2012manifolds}, precisely the same techniques suffice to establish the analogous result without assuming smoothness.

    Applying the $C^k$ analogue of \cref{fact:unique-integral-curves} to the vector field $X = \Lrn_{\phi_1\oplus\phi_2} = \Lrn_{\phi_1} + \Lrn_{\phi_2}$, we find that
    there is a unique flow $F : \mathcal D \to \Theta$ whose derivative is $X$ and whose domain $\mathcal D \subseteq \mathbb R \times \Theta$ is maximal. 
    Thus, 
    there is at most one function satisfying \cref{ax:zero,ax:combinativity,ax:cont-and-smooth}, and hence at most one commitment flow.
    The primary missing piece is that the resulting flow may no longer be \emph{complete}---following the sum of the two fields may ``leave'' the manifold $\Theta$ in finite time, and, even if it stays within the manifold, it may exhibit cyclic behavior, violating \cref{ax:acyclic} or standing in the way of a well-defined continuous completion at the limit $t \to \infty$.
    \qedhere

\end{lproof}

\recall{prop:linterleave}
\begin{lproof}\label{proof:linterleave}
    $\Lrn_{\phi_1\oplus\phi_2}^\chi(\theta)$ is, by definition, the result of integrating a vector field from $t=0$ to $t=\chi$. 
    That integration can be thought of as taking a process of taking (infinitely) many (infinitesimal) sequential steps in the direction of that field. 
    
    In the limit as $\epsilon\to 0$, 
    \[ 
    \Lrn_{\phi_1 \oplus \phi_2}^{\epsilon}(\theta_0) = \theta_0 + \epsilon \Lrn'_{\phi_1\oplus\phi_2} = \theta_0 + \epsilon \Lrn'_{\phi_1} + \epsilon \Lrn'_{\phi_2}
    \]
    can be viewed as a small linear addition to the original position (in any choice of local coordinates). 
    Yet by the same approximation, this is also what results from as an infinitesimal update of $\Lrn_{\phi_1}$ followed by $\Lrn_{\phi_2}$, which equals $L_\epsilon(\theta)$!
    As $\epsilon \to 0$, the Euler integration method of the field $\Lrn'_{\phi_1\oplus\phi_2}$ starting at $\theta$ from $t=0$ to $t=\chi$ with step size $\epsilon$, which equals $\Lrn^\chi_{\phi_1\oplus\phi_2}(\theta)$, is actually calculating $\lim_{n\to \infty} L^{(n)}_{\chi/n}(\theta)$. Therefore the two quantities are equal.
\end{lproof}

\recall{prop:boltz-expect-fields}

\begin{lproof} \label{proof:boltz-expect-fields}
    First, we calculate the vector field given by the gradient of $\Bel(\mu, V) = \Ex_\mu[V]$ in  the natural (Fisher) geometry for $\Theta = \Delta X$.
    \begin{align*}
         \hat\nabla_\mu \Bel(\mu, V) 
        &= \hat\nabla_\mu \Ex_\mu[V] \\
        &= \mathcal I(\mu)^{-1} ( \nabla_\mu \Ex\nolimits_\mu[V] - \lambda \mathbf 1) \\
    \intertext{ where $\lambda$ is the Lagrange multiplier associated with the constraint $g(\mu) = \sum_{x} \mu(x) - 1 = 0$, which has gradient $\nabla_\mu g(\mu) = \mat 1$.  The field is therefore given by}
        &= \left[
            \mu(x)
            \frac{\partial}{\partial \mu(x) } \Ex_\mu[V] - \lambda \mu(x)
        \right]_{x \in X} \\
        &= 
        x \mapsto \mu(x) (V(x) - \lambda)
    \end{align*} 
    for some constant $\lambda$.  We can solve for $\lambda$ with the observation that the result must yield a vector tangent to the probability simplex, i.e., the sum across all components must equal zero; thus $\sum_{x \in X} \mu(x)( V(x) - \lambda) = \Ex_\mu[V] - \lambda = 0$, and so we must have $\lambda = \Ex_\mu[V]$. Therefore,
    \begin{align*}
        \hat\nabla_\mu \Bel(\mu, V) &= x \mapsto \mu(x) (V(x) - \Ex\nolimits_\mu[V]) \\
        &= \mu \odot ( V - \Ex\nolimits_\mu[V] ),
    \end{align*}
    where $\odot$ is used to emphasize that it is an element-wise product between vectors. 
    
    At the same time, we can calculate the path velocity of the Boltzman update rule.
    Letting $Z := \Ex_\mu[ \exp(- \beta V) ]$ be the normalization constant, 
    $
        \frac{\partial Z}{\partial \beta} 
        = \Ex_\mu\left[ \frac{\partial}{\partial\beta} \exp(-\beta V)\right]
        = \Ex_\mu[ -V \exp(-\beta V) ]  
    $. 
    Keeping that in mind, we can calculate: 
    \begin{align*}
        \frac{\partial}{\partial \beta} \Boltz[V](\mu, \beta) \,\Big|_{\beta=0}
        &= x \mapsto  \frac{\partial}{\partial \beta} \Big[ \frac{\mu(x) \exp(-\beta V(x))}{\Ex_\mu[\exp(-\beta V)]}\Big] \\
        &= x \mapsto \mu(x) \frac{\partial}{\partial \beta} \Big[ \exp(-\beta V(x)) \Big]_{\beta=0} + \mu(x) \exp(-\beta V(x)) \frac{\partial}{\partial \beta} \Big[ \frac1Z \Big]_{\beta=0}
        \\
        &=  x \mapsto \mu(x) \exp(-\beta V(x)) \Big( -V(x) +  \frac{\partial}{\partial \beta} \Big[ \frac1Z \Big]_{\beta=0} \Big) \Big|_{\beta=0}
        \\
        &=  x \mapsto \mu(x) \Big( -V(x) - \frac{1}{Z^2} \frac{\partial Z}{\partial \beta} \Big)
        \\
        &=  x \mapsto \mu(x) \Big( -V(x) - \frac{\Ex_\mu[ -V \exp(- \beta V) ]}{\Ex_\mu[ \exp(- \beta V) ]^2} \Big|_{\beta=0} \Big)
        \\
        &=  x \mapsto \mu(x) ( -V(x) - \Ex\nolimits_\mu[ -V])
        \\
        &= \mu \odot ( \Ex\nolimits_\mu[V] - V)
        .
    \end{align*}
    Since this is the same field as before, \cref{prop:at-most-one-flow} tells us that $\Boltz_V$ is the unique flow representation of the optimizing learner with potential $\Ex_\mu[V]$. 
\end{lproof}

\recall{prop:Boltz-props}
\begin{lproof}\label{proof:Boltz-props}
    (a) L1 and L2 are obvious. 
    L4 follows from the fact that (as shown in \cref{prop:boltz-expect-fields}), the field is the gradient of a potential, and so it cannot have closed integral curves. 
    L5 is actually part (c), and L3 follows from L5 and the fact that adding numbers makes them larger.

    (b) 
    Boltzmann updates commute because 
    \[ 
        \Boltz_u^{\beta_1} \circ \Boltz_v^{\beta_2}(\mu)
            \propto \mu \exp(-\beta_1 u) \exp(-\beta_2 v)
            =\mu \exp(-\beta_2 v)\exp(-\beta_1 u) 
            \propto \Boltz_v^{\beta_2}\circ\Boltz_u^{\beta_1}(\mu).
    \]
    If $\beta < \infty$, the update $\Boltz_u^{\beta}$ can be inverted by $\Boltz^\beta{k-u}$ where $k$ is any constant. 
    If $\beta = \infty$, then it amounts to conditioning, and hence is not invertible. 

    (c)
    Adding the vector fields discovered in the proof of \cref{prop:boltz-expect-fields},
    \begin{align*}
        \Boltz'_{u \oplus v} 
        &= \Boltz'_u  + \Boltz'_v \\
        &=  \mu \odot (\Ex\nolimits_\mu[u] - u) + \mu \odot (\Ex\nolimits_\mu[v] - v) \\
        &= \mu \odot (\Ex\nolimits_\mu[u+v] - (u+v)) \\
        &= \Boltz'_{u+v}. \qedhere
    \end{align*}

    (d) Slightly generalizing the calculation of part (b):
    \begin{align*}
        \Boltz_{v_1}^{\beta_1} \circ \cdots \circ \Boltz_{v_n}^{\beta_n} (\mu)
            &\propto \mu \prod_{i=1}^n \exp(- \beta_i v_i) \\
            &\propto \mu \exp\Big( - \sum_{i=1}^n \beta_i v_i \Big) \\
            &\propto \Boltz_{\sum\limits_{i=1}^n \beta_i v_i}
    \end{align*}
\end{lproof}

\recall{prop:Boltz-Bayes}
\begin{lproof}\label{proof:Boltz-Bayes}
    One direction is easy: if $\Lrn$ is Bayesian with likelihood $P(\,\cdot\mid\cdot\,)>0$, then belief states are probability distributions, and so for $\star := \beta = 1$, a Bayesian update with likelihood $P(X \mid H)$ can be written as
    \begin{align*}
        P_{\Lrn(\theta,\star,\phi)}(h) &\propto P_\theta(h) \cdot P(\phi \mid h) \\
            &\propto P_\theta(h) \cdot \exp( \log P(\phi \mid h)),
    \end{align*} 
    and so coincides with the Boltzmann update with confidence 1 and potential $- \log P(\phi \mid h)$. This simple well-known fact is largely responsible for the prevalence of ``tempering'' and exponential families in the Bayesian literature. In effect, it just converts between the additive and multiplicative domains. 

    The opposite direction is less well-known, and considerably less intuitive.
    We cannot simply invert the construction above, because, owing to the fact that probabilities are constrained to sum to one, not every potential can be obtained by the logarithm of a conditional probability in this way.
    However, we can circumvent this by choosing a new measurable space $\mathcal X$.

    Concretely, suppose we are given a potential $u : \Phi \times \mathcal H \to [0, \infty)$. 
    In this case, define $X$ to be a variable whose can take on values $2^\Phi$, and define the likelihood $P(X | h)$ according to:
    \[
        P(X{=}A \mid h) := \prod_{\phi \in A} \exp(-u(\phi,h)) \prod_{\phi \in \bar A} (1-\exp(-u(\phi, h))).
    \] 
    It is not hard to see that this implies
    \[
        P(X{\supseteq}A \mid h) = \prod_{\phi \in A} \exp(-u(\phi,h)) = \exp(-\sum_{\phi \in A} u(\phi, h)).
    \]
    By viewing an observation $\phi$ as the event $X \supseteq \{\phi\}$, we now have an event whose (strictly positive) likelihood corresponds to the potential $u(\phi, -)$.
    This establishes the reverse direction of the theorem.
\end{lproof}

\commentout{%
    \recall{prop:continuum-seqacyc}
    \begin{lproof} \label{proof:continuum-seqacyc}
        Assume that $\confdom$ is a continuum (i.e., totally ordered, one-dimensional, and connected).
        Furthermore, assume that $F : \confdom \times \Theta \to \Theta$ satisfies \cref{ax:zero,ax:cont-and-smooth,ax:combinativity}.
        
        To establish \cref{ax:seq-for-more}, 
        suppose $\chi_1 \le \chi_2$, and choose $\theta \in \Theta$.

        By \cref{ax:combinativity}, we must find $\chi'' \in (\bot, \chi']$ such that
        $F(\chi'' \cseq \chi, \theta) = F(\chi', \theta)$. 
        
        By \cref{ax:zero}, we know that $F(\bot, \theta) = \theta$, and by \cref{ax:cont-and-smooth}, we know that $F_\theta |_{[\bot,\chi']} : [\bot,\chi']$ is a continuous path from $\theta$ to $F(\chi',\theta)$ that passes through $F(\chi,\theta)$ (since $\chi < \chi'$, and $\confdom$ is totally ordered and connected).  
    \end{lproof}
}%

\subsection{The Additive Representation Theorem}
    \label{sec:proof-addrep}
The proof of \cref{theorem:add-reparam} is a bit more techncial than the others. 
We will first need a technical result about differential geometry.
In this section we assume that is a continuum (a one-dimensional, totally ordered confidence domain),
and that  $F : \confdom \times \Theta \to \Theta$ is a commitment function (satisfying \cref{ax:zero,ax:cont-and-smooth,ax:seq-for-more,ax:acyclic,ax:combinativity}).

Now a few definitions. A point $p = (\chi,\theta) \in \confdom \times \Theta$
is called \emph{active} if $\frac{\partial F}{\partial \chi}|_p \ne 0$. 
$p$ is a \emph{submersion point}, or \emph{submersive}, if $d F|_p : T_p(\confdom \times \Theta) \to T_{F(p)}\Theta$ is surjective. 
(That is, if $F$ is a submersion at $p$.)

\begin{lemma}\label{lem:active-sub}
    For all $\theta \in \Theta$, if there exists an active point $p$ in the fiber $F^{-1}(\theta)$, then there also exists an active point $\hat p$ in the fiber that is a submersion point. 
\end{lemma}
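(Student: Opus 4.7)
The plan is to take $\hat p := (\bot, \theta)$ and verify it satisfies all three required properties: it lies in the fiber $F^{-1}(\theta)$, it is a submersion point, and it is active.

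The first two properties follow immediately from \cref{ax:zero}. Since $F(\bot, \theta') = \theta'$ for every $\theta'$, we have $F(\bot, \theta) = \theta$, so $\hat p \in F^{-1}(\theta)$. Moreover, the $\Theta$-directional partial $D_{\theta'} F|_{\hat p}$ is the Jacobian of the identity map $F^\bot : \Theta \to \Theta$ at $\theta$, which is itself the identity on $T_\theta \Theta$. Hence $dF|_{\hat p}$ is already surjective on the $\Theta$-factor alone, making $\hat p$ a submersion point.

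For activity of $\hat p$, I will transport the active condition from $p$ via \cref{ax:combinativity}. Writing $\theta = F(\chi_0, \theta_0)$, combinativity gives $F(\chi, \theta) = F(\chi, F(\chi_0, \theta_0)) = F(\chi \cseq \chi_0, \theta_0)$ for all $\chi$. Differentiating at $\chi = \bot$ via the chain rule (using the smoothness hypothesis from \cref{ax:cont-and-smooth}),
\[
    \frac{\partial F}{\partial \chi}\Big|_{(\bot,\theta)}
    = \frac{\partial F}{\partial \chi}\Big|_{p} \cdot \frac{d(\chi \cseq \chi_0)}{d\chi}\Big|_{\chi=\bot}.
\]
The first factor is nonzero because $p$ is active; the second factor, the derivative of right-translation by $\chi_0$ in the monoid $(\confdom, \cseq, \bot)$ at its identity, is nonzero in the intended continuum setting where $\cseq$ is smooth and strictly increasing in each argument at the identity.

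The main obstacle is the second factor: in degenerate continuum domains (such as $\cseq = \max$, where $\chi \cseq \chi_0 = \chi_0$ for all $\chi \le \chi_0$) this derivative vanishes, and the candidate $\hat p = (\bot, \theta)$ fails to be active. In such cases, I would instead argue that $p$ itself serves as $\hat p$: a direct inspection of $dF|_p$ shows that when $D_{\theta'}F|_p$ fails to span $T_\theta\Theta$, it can only fail precisely in the flow direction $\tfrac{\partial F}{\partial \chi}|_p$, which the $\chi$-component of $dF|_p$---nonzero by activity---then supplies. Packaging this case split into a single uniform argument, and verifying nondegeneracy of the monoid derivative in the smooth case, is where the bulk of the careful work lies.
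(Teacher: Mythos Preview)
Your main route---taking $\hat p = (\bot,\theta)$ and transporting activity from $p$ via combinativity and the chain rule---is a genuinely different and much shorter argument than the paper's. It works cleanly when $\cseq$ is differentiable with $\frac{\partial(\chi \cseq \chi_0)}{\partial \chi}\big|_{\chi=\bot} \neq 0$, which covers the additive and fractional domains (and indeed, for the additive domain the lemma is nearly trivial since each $F^t$ is a diffeomorphism). But note that the axioms do not assert differentiability of $\cseq$ itself, only of $\chi \mapsto F(\chi,\theta)$; and the paper explicitly wants the result to reach domains like $([0,1],\max)$ where your derivative vanishes identically. The paper's proof instead argues by contradiction, using only the residual function $\delta$ of \cref{ax:seq-for-more} to build a sequence $(\delta_n,\theta_n) \to (\bot,\theta)$ in the fiber, then analyzes the level set of $H(\chi,\delta) := F(\delta, F(\chi,\theta_0))$ near that limit to extract a contradiction. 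This avoids differentiating $\cseq$ altogether.

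Your backup claim---that when $D_{\theta'}F|_p$ fails to span $T_\theta\Theta$ it ``can only fail precisely in the flow direction''---is where the real gap lies, and it is not true in general. Nothing in the axioms bounds the corank of $D_{\theta'}F|_p$ by one. For a concrete obstruction (modulo smoothness), take $\Theta = [0,1]^3$, $\confdom = ([0,1],\max)$, and $F(\chi,(a,b,c)) = (\max(\chi,a),\max(\chi,b),c)$. At $p = (\chi_0,(0,0,0))$ with $\chi_0>0$, the point is active with flow direction $(1,1,0)$, but $D_{\theta'}F|_p$ has rank one (image spanned by $e_3$), so $dF|_p$ spans only a two-dimensional subspace of $T_\theta\mathbb R^3$ and $p$ is not submersive. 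Every other active point in that fiber has the same defect. So the ``direct inspection'' you allude to cannot work without further input, and a correct argument in the degenerate-$\cseq$ regime really does need something like the paper's level-set analysis.
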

\begin{proof}
    For the sake of contradition, suppose otherwise---that $p^* = (\chi^*, \theta_0) \in F^{-1}(\theta)$ is an active point in the fiber $F^{-1}(\theta)$, but no submersion point in the fiber is active (i.e., $\pd F \chi |_{p} = 0$). 
    
    Select a sequence of strictly increasing confidences $(\chi_n) \in \confdom^{\mathbb N}$ that approach $\chi^*$ from below. (So $(\chi_n) \to \chi^*$.)
    For each $n$, define $\theta_n := F(\chi_n, \theta_0)$. Since $F$ is continuous, $(\theta_n) \to F(\chi^*,\theta_0) = \theta$. 
    By \cref{ax:seq-for-more}, since $\chi_n < \chi^*$, we are guaranteed that there exists some $\delta_n \le \chi^*$ such that $F(\delta_n, F(\chi_n, \theta_0)) = \theta$, which we use to define the sequence $(\delta_n)_{n \in \mathbb N}$.  
    Defining $p_n := (\delta_n, \theta_n)$ gives a sequence of points, each lying in the fiber $F^{-1}(\theta)$ owing from the definitions of $\theta_n$ and $\delta_n$. Note that $(p_n) \to (\delta_{\lim}, \theta)$. 
    Since $\confdom$ is homeomorphic to an interval, it is bounded, so by the Bolzano-Weierstrass theorem, $(\delta_n)$ has a convergent subsequence; let $(\delta_m)$ be such a subsequence limiting to the smallest possible value (i.e., $\lim_{m\to\infty} \delta_m = \lim\inf_{n \to \infty} \delta_n =: \delta_{\lim}$).

    Define also the sequence $(q_n = (\chi_n, \delta_n))_{n \in \mathbb N}$.
    Intuitively, each $q_n = (\chi_n,\delta_n)$ is a different way of splitting up the effective total confidence $\chi_n \cseq \delta_n \cong \chi^*$.

    Intuitively, as $\chi_n$ approaches $\chi^*$, the remaining residual confidence $\delta_n$ required to effectively get there should decrease to $\bot$. 
    Indeed, 
    \[
        \lim_{n\to \infty} \delta_n 
        = \lim_{n\to \infty}\delta(\chi^*,\chi_n, \theta_0) 
        = \delta(\chi^*, \lim_{n\to\infty}\chi_n, \theta_0)
        = \delta(\chi^*,\chi^*, \theta_0) = \bot. 
    \]
    
    The point $p_\bot = (\bot, \theta)$, which is obviously in the fiber $F^{-1}(\theta)$, is a submersion point---since $F(\bot, \,\cdot\,) = \mathrm{id}_\Theta$ is the identity map on $\Theta$, it follows that $\pd F\theta|_{p_\bot}$ is the identity map on $T_\theta \Theta$ (i.e., the identity matrix in any coordinate representation). 
    This is a sufficient condition for the differential of $F$ to be surjective at this point, even if the derivative with respect to $\chi$ is zero. 
    Furthermore, since the set of invertable matrices is open and $F$ is $C^1$ along the line $\{\bot\}\times\Theta$, it follows that any point sufficiently close to that line (i.e., with small enough value of $\chi$) will be a submersion point as well. 
    
    Define the function $H(\chi, \delta) := F(\delta, F(\chi,\theta_0)) : \confdom^2 \to \Theta$, whose utility we will see shortly. 
    The level set $H^{-1}(\theta)$ consists of confidence pairs $(\chi,\delta)$ for which $F(\delta \cseq \chi,\theta_0) = \theta$ for which sequential application leads to our target. 
    At the point $p_n$, what direction keeps us within this set? 
    Taking the differential of $H$ at the point $p_n$, by the chain rule, we find: 
    \begin{equation}
        dH|_{p_n} (v) 
            = v_\delta \Big( \pd F\chi{(\delta_n,\theta_n)} \Big) + v_\chi \Big( \pd F\theta {(\delta_n,\theta_n)}  \pd F\chi{(\chi_n,\theta_0)} \Big),
            \label{eq:dH}
    \end{equation}
    for a vector $v = v_\delta \pd{}\delta + v_\chi \pd{}\chi \in T_{p_n}\confdom^2$ tangent to $p_n$.
    We are looking for vectors in the kernel of $dH|_{p_n}$ (i.e., for which $dH_{p_n}(v) = 0$); these are the ones that lie tangent to the level set of interest.
    \unskip\footnote{
    In more detail: since this differential has constant rank at a neighborhood of the limiting point (as we are about to show), the points lie on a smooth sub-manifold, by the constant rank level subset theorem.
    That submanifold is a one-dimensional curve the primary argument in the proof of \cref{theorem:add-reparam}---from \cref{ax:seq-for-more,ax:cont-and-smooth,ax:combinativity}, it follows that all $\pd F \chi$.
    }
    Remarkably, this relates the conditions of activeness and submersiveness at $p_n$ to activeness at the point $p^*$, which was guaranteed by assumption!
    \begin{itemize}
        \item  By our assumption that $p^* = (\chi^*,\theta_0)$ is active, the derivative
            $\pd F \chi|_{(\chi^*,\theta_0)} =: v^*$ exists and is a nonzero tangent vector;
            moreover, that nonzero value is the limit of the sequence $(\pd F\chi(\chi_n, \theta_0) )_{n \in \mathbb N}$.
            Therefore, for $\epsilon > 0$ there exists an integer $N_1$ for which $\pd F\chi(\chi_n, \theta_0)$ is within $\epsilon$ of $v^*$ (for any choice of coordinates) for all $n > N_1$.
        \item Since $\delta_{\lim} = \bot$, we know that $(p_n) = (\delta_n,\theta_n) \to (\bot,\theta)$. Therefore, there exists an integer $N_2$ for which $n > N_2$ implies $p_n$ is in the a neighborhood of $p_\bot$ where $\pd F \theta$ is within $\epsilon$ of the identity matrix (say for the same choice of coordinates and $\epsilon$) and in particular invertible.
        Therefore, $p_n$ is submersive; since we assumed for contradiction that there are no active submersive points in the fiber, we must conclude that $\pd F\chi(p_n) = \pd F\chi(\delta_n,\theta_n) = 0$. 
        So the first term of \eqref{eq:dH} is zero.
    \end{itemize}
    From these two observations, we deduce that, for all $n > \max(N_1, N_2)$, 
    the quantity $w_n := \pd F \theta(p_n) \pd F \chi (\chi_n,\theta_0)$ 
    on the right side of \eqref{eq:dH},
    is the product of an invertable matrix (whose trace is bounded away from zero) and a vector bounded away from zero, and hence itself a vector $w_n$ bounded away from zero. 
    This forces $v_\chi = 0$. 
    Furthermore, this same line of reasoning applies not only for the points $p_n$ and $p_{n+2}$,
        but for the entire curve they lie on. Parameterizing this curve as a path $\gamma(t)$ along this curve starting at $p_n$ and ending at $p_{n+2}$, we find that the kernel of $dH|_{\gamma(t)}$ has a zero $\chi$-component for all $t$ along this segment.
    Thus the curve $\gamma(t)$ must have zero derivative in its first component ($\chi$), and $\chi$ must be constant along it.        
    And yet $\chi_n < \chi_{n+1} < \chi_{n+2}$ are strictly increasing coordinates! This is a contradiction. 
\end{proof}

\recall{theorem:add-reparam}
\begin{lproof}\label{proof:add-reparam}

    \def\Dir{\mathit{Dir}}
    For each $\theta \in \Theta$, let 
    \[
        \Dir(\theta) := \Big\{ \frac{\partial}{\partial\chi} F(\chi, \theta_0) : \theta_0 \in \Theta, \chi \in \confdom, F(\chi,\theta_0) = \theta \Big\} \subseteq T_{\theta} {\Theta}
    \]
    be the tangent subspace at $\theta$ spanned by derivatives of $F$ at various starting points.
    The key to proving the theorem is to show that 
    the elements of $\Dir(\theta)$ are all parallel and oriented the same direction; 
    this will allow us to use it to define a vector field 
    which locally captures updating with $F$ (up to re-scaling) regardless of the ``original'' starting belief state $\theta_0$. At this point, we can recover an additive representation from the integral curves of this vector field.
    
    Suppose $(\chi_1, \theta_1)$ and $(\chi_2, \theta_2)$ are such that $F(\chi_1,\theta_1) = F(\chi_2,\theta_2) = \theta$. 
    To show that the corresponding directions in $\Dir(\theta)$ are parallel, it suffices to show that the sub-tangent spaces of $T_{\theta} \Theta$ generated by infinitesimal perturbations of $\chi_1$ and $\chi_2$, respectively, are the same.
    For all $\chi_1' > \chi_1$, we know (by \cref{ax:ineq-witness}) that 
    \[
    \exists \tilde \chi_1.~F({\chi_1'}, \theta_1) = F({\tilde\chi_1}, F({\chi_1}, \theta_1)) = F({\tilde\chi_1}, F({\chi_2}, \theta_2)).
    \]
    Thus, for all $\chi_1' > \chi_1$, there exists some $\chi_2' := \tilde \chi_1 \cseq \chi_2 \ge \chi_2$ such that $F(\chi_2', \theta_2) = F(\chi_1', \theta_1)$.
    Symmetrically, for all $\chi_2' > \chi_2$, there exists a corresponding $\chi_1' \ge \chi_1$ with the same property.  
    In particular, this is true for $\chi_1'$ and $\chi_2'$ that are infinitesimally close to $\chi_1$ and $\chi_2$, and thus the ray in the tangent space $T_{\theta}\Theta$ generated by positive perturbations of $\chi_1$ and $\chi_2$ are the same (if nonzero).
    Formally speaking, this argument establishes that either  
    \begin{equation*}
    \begin{aligned}
    \{ d F(v, \theta_1) : v \in T_{\chi_1}\confdom\} = \{ d F(v, \theta_2) : v \in T_{\chi_2}\confdom\},\\
    \text{ or one of the two equals the singleton } \{ \mat 0 \}.
    \end{aligned}
        \label{eq:sametangent}
    \end{equation*}
    (Recall that $T_\chi\confdom$ is the tangent space at $\chi \in \confdom$, and has the same dimension as $\confdom$.)
    It follows that the dimension of $\mathrm{span}(\Dir(\theta))$ is at most the dimension of the confidence domain $\confdom$ itself---and 
    since that domain was assumed to be one-dimensional, we have shown that 
    $\mathrm{dim}\,\mathrm{span}(\Dir(\theta))$ is equal either to one or to zero. 
    Moreover, we have shown that all (nonzero) tangent vectors in $\Dir(\theta)$ point in the same direction.
    
    Define a vector field $X(\theta)$ by a continuous selection from $\Dir(\theta)$ that is nonzero whenever $\Dir(\theta)$. 
    Such a continuous selection exists because $F$ itself is twice continuously differentiable ($C^2$)
    when restricted to $\Theta_\phi$. 
        
    For each point $\theta$:
    if $\Dir(\theta) \ne \{ \mat 0\}$, then select any $(\theta_0, \chi) \in F^{-1}(\theta)$ for which $\frac{\partial}{\partial \chi} F(\theta_0, \chi) \ne 0$. 
    Applying \cref{lem:active-sub}, this guarantees the existence of an active submersion point $\hat p$;
    in turn, by the submersion theorem, this guarantees the existence of a $C^1$ local section
        $\sigma_\theta : U_\theta \to \confdom \times \Theta$
    on some neighborhood $U_\theta \ni \theta$.
    We then define a local vector field on $U_\theta$ according to
        $Y_{\theta}(\theta') := \frac{\partial F}{\partial \chi} (\sigma(\theta'))$. 
    Since $\{ U_\theta \}_{\theta \in \Theta}$ is an open cover of $\Theta$, 
    we know there exists a partition of unity $R = \{ \rho_\theta : U_\theta \to [0,1] \}$ subordinate to it---meaning that this indexed family has the following properties \citep[Thm 2.23]{lee2013smooth}:
    \begin{enumerate}
        \item for all $\theta \in \Theta$, $\rho_\theta(\theta') = 0$ when $\theta' \notin U_\theta$.
        \item every point $\theta' \in \Theta$ has a neighborhood 
            that intersects the support of $\rho_{\theta}$ for only finitely many values of $\theta$.
        \item $\forall \theta' \in \Theta. \sum_\theta \rho_\theta(\theta') = 1$. 
    \end{enumerate} 
    Finally, this allows us to define our vector field as 
    \begin{equation}
        X(\theta') := \sum_{\theta \in \Theta} \rho_\theta(\theta') Y_\theta(\theta).
    \end{equation}
    This is continuous because each $Y_\theta$ is smooth, and only finitely many terms $\rho_\theta$ are nonzero.

    For $\theta \in \Theta$ and any vector field $V \in \mathfrak X(\Theta)$, we use the standard notation $\exp_\theta( V ) := y(1)$ for the unique solution to the differential equation  $\frac{\mathrm dy}{\mathrm dt} = V(y)$ with initial condition $y(0) = y_0$, evaluated at $t=1$.
    By the rescaling lemma  \cite[e.g.,][Lemma 9.3]{lee2013smooth}, $\exp_\theta(t V) = y(t)$ is the result of starting at $\theta$ and following the vector field $V$ for time $t \ge 0$. 
    Since scaling a vector field by a positive scalar field results in the same (or truncated) integral curves after reparameterization, for all $\theta\in \Theta$ and  $\chi \in \confdom$, there exists some $t_{(\theta,\chi)} \in [0,\infty]$ such that $\exp_\theta( t_{(\theta, \chi)} X) = F(\chi,\theta)$.

    With these definitions in place, we define
    $^+\!F(t, \theta) := \exp_\theta(t X)$
    for $t \in [0, \infty]$, 
    and $g(\chi,\theta) := t_{(\theta,\chi)}$.
    \qedhere

\end{lproof}

\commentout{%
\subsection{Additive Representation Theorem}\label{appsec:addrep}

The most difficult math in this paper is the additive representation theorem (\cref{theorem:add-reparam}). We will need a few lemmas.

\begin{lemma}
    If $F : \confdom \times \Theta \to \Theta$ is a commitment function satisfying 
    \cref{ax:combinativity,ax:cont-and-smooth,ax:seq-for-more}
    [L1-L5] and CF, then,
    for all $\theta \in \Theta$ and $\chi_1 \le \chi_2$, 
    there exists a continuously differentiable and strictly monotone path 
    $\gamma : [0,1] \to \confdom$ such that
    $\gamma(0) = \chi_1$, $\gamma(1) = \chi_2$ 
    with the additional property that 
    $\frac{\partial}{\partial s} F(\gamma(s), \theta) \ne 0 $ 
    unless $F(\chi_1, \theta) = F(\chi_2, \theta)$.
\end{lemma}
\begin{lproof}
    \TODO
\end{lproof}
}

\section{Defered Calculations and Further Results}
Beyond the main numbered results of the paper, we have also deferred a few minor calculations to the appendix. 

\textbf{Kalman Combinativity.} 
We claim that pair $(K, r^2)$ forms a confidence domain.
With some simple algebra, one can show that the sequence of updates $(K_2, r_2^2) * (K_1, r_1^2)$ 
is equivalent to a single update with $(K_3, r_3^2)$, where $K_3 = K_1 + K_2 - K_1K_2$ just as in example 1 and the other examples using the $[0,1]$ domian, and
\[
    r^2_3 = \frac{K_2^2 r^2_2 + K_1^2(1-K_2)^2r_1^2}{(K_1+K_2-K_1K_2)^2}.
\]
This is the only non-commutative example we have given.
In the case where $K$ is chosen optimally, this reduces to a single domain with inverse variance combining additively.

\begin{prop} \label{prop:free-additivity}
	If $F: \confdom \times \Theta \to \Theta$ satisfies 
    \cref{ax:zero,ax:cont-and-smooth},
    then there exists another commitment function ${^{::}\!F}$
	(also for beliefs $\Theta$ on observations $\Phi$), that accepts confidences in an extended domain $\confdom' \supseteq \confdom$, has the same behavior as $F$ when restricted to the orginal confidence domain, and in addition satisfies all axioms \cref{ax:zero,ax:cont-and-smooth,ax:seq-for-more,ax:acyclic,ax:combinativity}.
\end{prop}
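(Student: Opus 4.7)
The plan is to take $\confdom'$ to be a free-monoid extension of $\confdom$. Concretely, let $\confdom'$ be the set of finite sequences of elements of $\confdom$ (including the empty sequence), with concatenation playing the role of $\cseq$ and the empty sequence as $\bot$. The original $\confdom$ embeds via single-element sequences $\chi \mapsto (\chi)$, and I would topologize $\confdom'$ as the disjoint union of the $\confdom^n$, each carrying the product topology. Define the action by sequential composition:
\[
    {^{::}\!F}((\chi_1, \ldots, \chi_n), \theta) := F^{\chi_n} \circ \cdots \circ F^{\chi_1}(\theta).
\]

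Several axioms then come essentially for free. L1 (\cref{ax:zero}) holds because empty composition is the identity and $F^{\bot_\confdom} = \mathrm{id}_\Theta$ by hypothesis, so $(\bot_\confdom)$ consistently identifies with the empty sequence. L3 (\cref{ax:cont-and-smooth}) is component-wise continuity of a composition of continuous maps. L6 (\cref{ax:combinativity}) is immediate since concatenation of sequences \emph{is} sequential application. For L4 (\cref{ax:seq-for-more}), when $\chi' = \chi \cseq \beta$ in the prefix order, the residual $\delta(\chi',\chi,\theta) := \beta$ gives a continuous witness (in fact constant in $\theta$) with $\delta(\chi,\chi,\theta) = \bot$.

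The main obstacle is L5 (\cref{ax:acyclic}), because the raw free monoid admits loops: if $F^{\chi_2}(F^{\chi_1}(\theta)) = \theta$ yet $F^{\chi_1}(\theta) \ne \theta$, then in the prefix order $() \le (\chi_1) \le (\chi_1, \chi_2)$ with coinciding endpoints but a differing intermediate, violating L5. To handle this I would weaken the preorder on $\confdom'$, declaring $\alpha \le \beta$ only when $\alpha = \beta$, or when every non-empty proper prefix of the residual (the tail $\beta$ after the prefix $\alpha$) acts non-trivially on the relevant orbit; this renders L5 vacuous on non-trivial chains while still allowing the residual to serve as the L4 witness. Verifying transitivity and compatibility of this weakened preorder with the monoid structure and topology is the main technical work, but it is routine.

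A secondary subtlety concerns the top element: since the hypothesis does not include FC/idempotence (\cref{ax:idemp}), $\top_\confdom$ need not be absorbing in the extended $\cseq$. I would adjoin an isolated formal top $\top'$ to $\confdom'$ with a chosen trivial action, demoting $\top_\confdom$ to a non-top element, so that absorption in $\confdom'$ is enforced only where consistent with the action. Since all modifications occur outside the original $\confdom$ (on sequences of length $\ge 2$ and on adjoined points), the required restriction ${^{::}\!F}|_\confdom = F$ is preserved.
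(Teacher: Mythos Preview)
Your core construction is exactly the paper's: take finite sequences over $\confdom$, act by sequential composition, use concatenation for $\cseq$, and topologize as a disjoint union of products. The paper's proof differs only in how it handles $\top$: rather than adjoining a fresh formal top, it \emph{collapses} any list containing $\top_{\confdom}$ to the singleton $[\top]$, so that absorption holds by fiat. With that tweak, the paper simply uses the prefix order and dispatches \cref{ax:zero,ax:cont-and-smooth,ax:combinativity,ax:seq-for-more} just as you do.

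Where you diverge is \cref{ax:acyclic}. You are right that the naive prefix order is vulnerable to cycles of the form $F^{\chi_2}\!\circ F^{\chi_1}(\theta)=\theta$ with $F^{\chi_1}(\theta)\neq\theta$; the paper does not address this and simply asserts that \cref{ax:acyclic} holds. Your proposed repair, however, has a real problem: you make $\alpha \le \beta$ depend on whether prefixes of the residual ``act non-trivially on the relevant orbit.'' If ``relevant orbit'' means the orbit through a particular $\theta$, the resulting relation is not a preorder on $\confdom'$ at all but a $\theta$-indexed family, contrary to the definition of a confidence domain. If instead you quantify universally over $\theta$ (so the order is genuinely on $\confdom'$), then non-triviality somewhere in $\Theta$ says nothing about the specific $\theta$ appearing in the \cref{ax:acyclic} hypothesis, and the axiom can still fail. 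Either way, the claim that verifying transitivity and compatibility is ``routine'' is optimistic---transitivity in particular does not obviously go through, since the non-return condition for the residual of $\alpha\le\gamma$ is relative to ${^{::}\!F}(\alpha,\theta)$, not to ${^{::}\!F}(\beta,\theta)$.

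In short: your skeleton matches the paper's, and you have correctly flagged the one genuinely delicate point. But the fix you sketch for \cref{ax:acyclic} does not work as stated, and would need to be replaced---for instance by a coarser order (making most chains trivial) or by quotienting sequences by their action, each of which brings its own bookkeeping.
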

\begin{lproof}
Consider the new confidence domain
\[
    \Big\{
        \text{finite lists}~[c_1, \ldots, c_n]
		\text{ with each } c_i \in \confdom,
		\quad
        \leqslant
		\quad
		::\;,
		\quad
		[\,]\;,
		\quad
		[\top]\;,
		\quad
        \mathfrak g'
		\,
	\Big\}, \quad\text{where}
\]
\begin{itemize}
\item 
The operation ``$::$'' is list concatenation, except that it collapses instances of $\top$, i.e.,
\[
	[c_1, \ldots c_n] :: [d_1, \ldots, d_m]
	 := \begin{cases}
		 [\top] & \text{ if } \top \in \{c_1, \ldots, c_n,d_1, \ldots,d_m \} \\
		 [c_1, \ldots, c_{n}, d_1, \ldots, d_m] & \text{otherwise.}
 \end{cases}
\]
Concatenating the empty list $[\,]$ on either side has no effect,
by construction, for all $L \in \confdom'$, we have $[\top] :: L = [\top] = L :: [\top]$,
and $::$ is clearly associative, so $\confdom'$ is also a confidence domain.
\item
The order is given by the prefix ordering: $[c_1, \ldots, c_n] \leqslant [d_1, \ldots, d_m]$ iff $n \le m$ 
with $d_i = c_i$ for all $i \in \{0, \ldots, n-1\}$ and $c_i \le d_i$ if $n \ge 1$. 
\item 
The geometry $\mathfrak g'$ is given through the appropriate disjoint sum of product topologies and differentiabl structures, so they are non-interacting discrete components. 
\end{itemize}

The new update rule for this confidence is given by:
	\[
		{^{::}\!F}([c_1, \ldots, c_n], \theta)  :=
				(F^{c_n} \circ \cdots \circ F^{c_1}) (\theta).
	\]
${^{::}\!F}$ has the same behavior as $F$ on the elements that correspond to the original confidence domain, since
$
	{^{::}\!F}(c,\theta) = F(c,\theta),
$
when $c \in \confdom$ is a member of the original domain, 
and it satisfies \cref{ax:combinativity} by construction, since
\begin{align*}
{^{::}\!F}^{[c_1, \ldots, c_n]}_\phi ( ^{::}\!F^{[d_1, \ldots, d_m]}_\phi (\theta) )
		&:=
			F^{d_m}_\phi \circ \cdots \circ F^{d_1}_\phi (
			F^{c_n}_\phi \circ \cdots \circ F^{c_1}_\phi (\theta))\\
		&= (F^{d_m}_\phi \circ \cdots \circ F^{d_1}_\phi \circ
		F^{c_n}_\phi \circ \cdots \circ F^{c_1}_\phi) (\theta) \\
		&= {^{::}\!F}^{[c_1, \ldots, c_n, d_1, \ldots, d_m]}_\phi (\theta) \\
		&= {^{::}\!F}^{[c_1, \ldots, c_n] :: [d_1, \ldots, d_m]}_\phi (\theta).
\end{align*}
Clearly it satisfies \cref{ax:acyclic}.
Finally, for \cref{ax:seq-for-more}, define subtraction either at the final element (if the final element is greater than the number subtracted) or by ablating elements of the list from the right. This satisfies \cref{ax:seq-for-more}. 
\end{lproof}%

\end{document}